\algnewcommand\algorithmicto{\textbf{to}}
\algnewcommand\RETURN{\State \textbf{return} }
\let\NAT@parse\undefined
\title{\LARGE \bf{Refined Analysis of Asymptotically-Optimal \\ Kinodynamic Planning in the State-Cost Space}}%\bf{AO-RRT2 for Optimal Kinodynamic Sampling-Based Motion Planning}} 
\author{Michal Kleinbort$^{1}$, Edgar Granados$^{2}$, Kiril Solovey$^{3}$, Riccardo Bonalli$^{3}$, Kostas E. Bekris$^{2}$, and Dan Halperin$^{1}$% <-this % stops a space
	\thanks{Work by D.H. and M.K. has been supported in part by the Israel Science Foundation
(grant nos.~825/15,1736/19), by the Blavatnik Computer Science Research Fund,
and by grants from Yandex and from Facebook. 
K.B. was supported by NSF awards IIS-1734492, IIS-1723869, CCF-1934924.}% <-this % stops a space
	\thanks{$^{1}$Blavatnik School of Computer Science, Tel-Aviv University, Israel.}%
	\thanks{$^{2}$Computer Science Department, Rutgers University, NJ~08854, USA.}%
	\thanks{$^{3}$Aeronautics and Astronautics Department, Stanford University, CA~94305, USA.}%
}
\pgfplotsset{compat=1.14}
\begin{document}

\maketitle
\thispagestyle{empty}
\pagestyle{empty}

\newcommand{\cupdot}{\mathbin{\mathaccent\cdot\cup}}

%tex tools
\newcommand{\ignore}[1]{}

%%% algorithms
\def\vor{\textup{Vor}}

\def\P{\mathcal{P}} \def\C{\mathcal{C}} \def\H{\mathcal{H}}
\def\F{\mathcal{F}} \def\U{\mathcal{U}} \def\L{\mathcal{L}}
\def\O{\mathcal{O}} \def\I{\mathcal{I}} \def\S{\mathcal{S}}
\def\G{\mathcal{G}} \def\Q{\mathcal{Q}} \def\I{\mathcal{I}}
\def\T{\mathcal{T}} \def\L{\mathcal{L}} \def\N{\mathcal{N}}
\def\V{\mathcal{V}} \def\B{\mathcal{B}} \def\D{\mathcal{D}}
\def\W{\mathcal{W}} \def\R{\mathcal{R}} \def\M{\mathcal{M}}
\def\X{\mathcal{X}} \def\A{\mathcal{A}} \def\Y{\mathcal{Y}}
\def\L{\mathcal{L}}

\def\dS{\mathbb{S}} \def\dT{\mathbb{T}} \def\dC{\mathbb{C}}
\def\dG{\mathbb{G}} \def\dD{\mathbb{D}} \def\dV{\mathbb{V}}
\def\dH{\mathbb{H}} \def\dN{\mathbb{N}} \def\dE{\mathbb{E}}
\def\dR{\mathbb{R}} \def\dM{\mathbb{M}} \def\dm{\mathbb{m}}
\def\dB{\mathbb{B}} \def\dI{\mathbb{I}} \def\dM{\mathbb{M}}
\def\dZ{\mathbb{Z}}

\def\E{\mathbf{E}} % used for denoting expectation

\def\eps{\varepsilon}

\def\limn{\lim_{n\rightarrow \infty}}

\def\Reals{\mathbb{R}}
\def\Naturals{\mathbb{N}}
\renewcommand{\leq}{\leqslant}
\renewcommand{\geq}{\geqslant}
\newcommand{\compl}{\mathrm{Compl}}
\def\eqq{\coloneqq}

% libs
\newcommand{\stl}{\textsc{Stl}\xspace}
\newcommand{\boost}{\textsc{Boost}\xspace}
\newcommand{\core}{\textsc{Core}\xspace}
\newcommand{\leda}{\textsc{Leda}\xspace}
\newcommand{\cgal}{\textsc{Cgal}\xspace}
\newcommand{\qt}{\textsc{Qt}\xspace}
\newcommand{\gmp}{\textsc{Gmp}\xspace}

\newcommand{\ch}{\mathrm{ch}}
\newcommand{\pspace}{{\sc pspace}\xspace}
\newcommand{\threesum}{{\sc 3Sum}\xspace}
\newcommand{\np}{{\sc np}\xspace}
\newcommand{\degree}{\ensuremath{^\circ}}
\newcommand{\argmin}{\operatornamewithlimits{argmin}}

\newcommand{\dist}{\textup{dist}}

\newcommand{\Cfree}{\C_{\textup{free}}}
\newcommand{\Cforb}{\C_{\textup{forb}}}

\newtheorem{lemma}{Lemma}
\newtheorem{theorem}{Theorem}
\newtheorem{corollary}{Corollary}
\newtheorem{claim}{Claim}
\newtheorem{proposition}{Proposition}
\newtheorem{assumption}{Assumption}
%\spnewtheorem{assumption}{Assumption}{\bf}{\it}

\theoremstyle{definition}
%\spnewtheorem{definition}{Definition}{\bf}{\it}
\newtheorem{definition}{Definition}
\newtheorem{remark}{Remark}
\newtheorem{observation}{Observation}

\def\Im{\textup{Im}}

\def\aas{a.a.s.\xspace}
\def\0{\bm{0}}

\makeatletter
\def\thmhead@plain#1#2#3{%
  \thmname{#1}\thmnumber{\@ifnotempty{#1}{ }\@upn{#2}}%
  \thmnote{ {\the\thm@notefont#3}}}
\let\thmhead\thmhead@plain
\makeatother

\def\todo#1{\textcolor{blue}{\textbf{TODO:} #1}}
\def\new#1{\textcolor{magenta}{#1}}
\def\kiril#1{\textcolor{ForestGreen}{\textbf{Kiril:} #1}}
\def\old#1{\textcolor{red}{#1}}
\def\michal#1{\textcolor{red}{\textbf{Michal:} #1}}
\def\riccardo#1{\textcolor{Blue}{\textbf{Riccardo:} #1}}

\def\removed#1{\textcolor{green}{#1}}

\def\dt{\,\mathrm{d}t}
\def\dx{\,\mathrm{d}x}
\def\dy{\,\mathrm{d}y}
\def\drho{\,\mathrm{d}\rho}

% algorithms
\newcommand{\prm}{{\tt PRM}\xspace}
\newcommand{\prmstar}{{\tt PRM}$^*$\xspace}
\newcommand{\rrt}{{\tt RRT}\xspace}
\newcommand{\est}{{\tt EST}\xspace}
\newcommand{\grrt}{{\tt GEOM-RRT}\xspace}
\newcommand{\rrtstar}{{\tt RRT}$^*$\xspace}
\newcommand{\rrg}{{\tt RRG}\xspace}
\newcommand{\btt}{{\tt BTT}\xspace}
\newcommand{\fmt}{{\tt FMT}$^*$\xspace}
\newcommand{\dfmt}{{\tt DFMT}$^*$\xspace}
\newcommand{\dprm}{{\tt DPRM}$^*$\xspace}
\newcommand{\mstar}{{\tt M}$^*$\xspace}
\newcommand{\drrtstar}{{\tt dRRT}$^*$\xspace}
\newcommand{\sst}{{\tt SST}\xspace}
\newcommand{\sststar}{{\tt SST}$^*$\xspace}
\newcommand{\stride}{{\tt STRIDE}\xspace}
\newcommand{\aorrt}{{\tt AO-RRT}\xspace}
\newcommand{\aorrtrebuilding}{{\tt Multi-tree AO-RRT}\xspace}%AO-RRT Tree Rebuilding}\xspace}
\newcommand{\aorrtnopruning}{{\tt AO-RRT}\xspace}
\newcommand{\aoest}{{\tt AO-EST}\xspace}
\newcommand{\kpiece}{{\tt KPIECE}\xspace}
\newcommand{\hybrrttwo}{{\tt HybAO-RRT}\xspace}
\newcommand{\hybrrttwostride}{{\tt HybAO-RRT-STRIDE}\xspace}
\newcommand{\hybrrttwoest}{{\tt HybRRT2\!.\!0-EST}\xspace}
\newcommand{\rrttwo}{{\tt AO-RRT}\xspace}
\newcommand{\aorrtprune}{{\tt AO-RRT Pruning}\xspace}
\newcommand{\rrtbc}{{\tt BCRRT}\xspace}
\newcommand{\rrtbctwo}{{\tt BCRRT2\!.\!0}\xspace}

\newcommand{\xmin}{x_{\textup{min}}}
\newcommand{\Xnear}{X_{\textup{near}}}
\newcommand{\Xgoal}{\X_{\textup{goal}}}
\newcommand{\xgoal}{x_{\textup{goal}}}
\newcommand{\xinit}{x_{\textup{init}}}
\newcommand{\xnew}{x_{\textup{new}}}
\newcommand{\xnear}{x_{\textup{near}}}
\newcommand{\xrand}{x_{\textup{rand}}}
\newcommand{\xrandtwo}{x_{\textup{rand2}}}
\newcommand{\ygoal}{y_{\textup{goal}}}
\newcommand{\yinit}{y_{\textup{init}}}
\newcommand{\ynew}{y_{\textup{new}}}
\newcommand{\ynear}{y_{\textup{near}}}
\newcommand{\yrand}{y_{\textup{rand}}}
\newcommand{\ymin}{y_{\textup{min}}}
\newcommand{\xparent}{x_{\textup{parent}}}
\newcommand{\cmin}{c_{\textup{min}}}
\newcommand{\cmax}{c_{\textup{max}}}
\newcommand{\crand}{c_{\textup{rand}}}
\newcommand{\cnew}{c_{\textup{new}}}
\newcommand{\cnear}{c_{\textup{near}}}
\newcommand{\Tprop}{T_{\textup{prop}}}
\newcommand{\trand}{t_{\textup{rand}}}
\newcommand{\tnew}{t_{\textup{new}}}
\newcommand{\urand}{u_{\textup{rand}}}
\newcommand{\unew}{u_{\textup{new}}}
\newcommand{\pinew}{\pi_{\textup{new}}}
\newcommand{\pimin}{\pi_{\textup{min}}}

\newcommand{\randomstate}{\textsc{random-state}}
\newcommand{\sample}{\textsc{sample}}
\newcommand{\nearest}{\textsc{nearest}}
\newcommand{\near}{\textsc{near}}
\newcommand{\steer}{\textsc{steer}}
\newcommand{\collisionfree}{\textsc{collision-free}}
\newcommand{\propagate}{\textsc{propagate}}
\newcommand{\newstate}{\textsc{new-state}}
\newcommand{\propstate}{\textsc{prop-state}}
\newcommand{\propcost}{\textsc{prop-cost}}
\newcommand{\cost}{\textsc{cost}\xspace}
\newcommand{\tracepath}{\textsc{trace-path}}
\newcommand{\nulll}{\textsc{null}}

\newcommand{\addvertex}{\textup{add\_vertex}}
\newcommand{\addedge}{\textup{add\_edge}}
\newcommand{\init}{\textup{init}}

\newcommand{\constraints}{h}
\newcommand{\var}{\alpha}

\newcommand\icraVer[2]{\ifthenelse{\isundefined{\extended}}{#1}{#2}}

%%% Local Variables:
%%% mode: plain-tex
%%% TeX-master: "main"
%%% End:

\def\extended{}

\begin{abstract}
We present a novel analysis of AO-RRT: a tree-based planner for motion planning with kinodynamic constraints, originally described by Hauser and Zhou (AO-X, 2016). AO-RRT 
explores the state-cost space and
has been shown to efficiently obtain high-quality solutions in practice without relying on the
availability of a computationally-intensive two-point boundary-value solver. 
Our main contribution is
an optimality proof for the single-tree version of the algorithm---a variant that was not analyzed before. Our proof only requires a mild and easily-verifiable set of
assumptions on the problem and system: Lipschitz-continuity of the
cost function and the dynamics. In particular, we prove that for any
system satisfying these assumptions, any trajectory having a
piecewise-constant control function and positive clearance from the
obstacles can be approximated arbitrarily well by a trajectory found
by AO-RRT. 
We also discuss practical aspects of AO-RRT and present 
experimental comparisons of variants of the algorithm.
\end{abstract}

\section{Introduction}\label{sec:introduction}

\emph{Motion planning} is a fundamental problem in robotics, concerned with allowing autonomous robots to navigate in complex environments while avoiding collisions with obstacles. The problem is already challenging in the simplified geometric setting, and even more so when considering the kinodynamic constraints that the robot has to satisfy. This work is concerned with the latter setting, and consider the case where the robot's system is specified by differential constraints of the form
\vspace{-.05in}
\begin{equation}
\dot{x}= f(x, u),\quad \text{for } x\in \mathcal{X}, u\in \U,
\label{eq:dyn}
\vspace{-.05in}
\end{equation}
where $\X\subseteq \dR^d$ is the robot's state space, and $\U\subseteq\dR^D$ is the control space, for some $d,D\geq 2$. The objective of motion planning is thus to find a control function $\Upsilon:[0,T]\rightarrow \U$, which induces a \emph{valid} trajectory $\pi:[0,T]\rightarrow \X$, such that (i) Equation~\eqref{eq:dyn} is satisfied, (ii) $\pi$ is contained in the free space $\F\subseteq \X$, and (iii) the motion takes the robot from its initial state $\xinit$ to the goal region $\Xgoal\subseteq \X$.

In \emph{optimal motion planning}, the objective is to find a control function $\Upsilon$ and a trajectory $\pi$ satisfying the constraints (i), (ii), (iii), which also minimize the trajectory cost, specified by 
\vspace{-.05in}
\begin{equation}
\cost(\pi)= \int_0^T g(\pi(t),\Upsilon(t))dt, 
\label{eq:cost}
\vspace{-.05in}
\end{equation}
where $g:\X\times\U \rightarrow \dR_+$ is a cost derivative. Depending on the precise formulation of $g$, $\cost(\pi)$ may represent the distance traversed by the robot,  the energy required to execute the motion, or other metrics.

Almost thirty years of research on motion planning have led to a variety of approaches to tackle  the problem, ranging from computational-geometric algorithms, potential fields, optimization-based methods, and search-based solutions~\cite{KavrakiLaValle08,LaValle06}. To the best of our knowledge, the only approach that is capable of satisfying global optimality guarantees, while still being computationally practical, is sampling-based planning. Sampling-based algorithms capture the connectivity of the free space of the problem via random sampling of states (and sometimes controls) and connecting nearby states, to yield a graph structure.

The celebrated work of Karaman and Frazzoli~\cite{KF11} laid the foundations for optimality in sampling-based motion planning. They introduced several new algorithms and proved mathematically that they converge to the optimal solution as the number of samples generated by the algorithms tends to infinity. This property is termed asymptotic optimality (AO). Many researchers have followed their footsteps, and designed new algorithms, which can be used in various applications~\cite{JSCP15,SH15,GSB15,SolHal16}. 

Unfortunately, the applicability of most of the aforementioned results to optimal planning with kinodynamic constraints remains limited. In particular, the majority of results only apply to the geometric (holonomic) setting of the problem. While a small subset of results do consider the kinodynamic case, they assume the existence of a two point \emph{boundary value problem (BVP) solver}, which given two states $x,x'\in \X$ returns the lowest-cost trajectory connecting them (see, ~\cite{SJP15,SJP_ICRA15,KarFra10,KarFra13,PerETAL12,WebBer13,XieETAL15}). In practice BVP solvers are usually only available for simple robotic systems, and in many cases they are prohibitively costly to use, which limits their applicability.

Recently, there have been sampling-based approaches that do not rely on the existence of a BVP solver~\cite{PapaETAL14}. These methods employ forward propagation instead. Li et al.~\cite{LiWAFR14,LiETAL16} provided an analysis of tree sampling-based planners that perform random propagation from first principles and proposed the \sst algorithm. \sst is in practice computationally efficient and achieves asymptotic near-optimality, which is the property of converging toward a path with bounded suboptimality. True AO properties can be achieved by \sststar, which sacrifices computational efficiency by progressively shrinking a pruning radius parameter. The approach proposed here aims for AO properties and computational efficiency, while avoiding the critical dependence on parameters, such as pruning radii, that are difficult to tune for a variety of motion planning problems.

%A similar study by  describe the random-tree approach, which selects the next vertex to %expand from in a uniform and random manner among all vertices. The expansion is then %performed using forward propagation of a random control input. They prove that the %random-tree is AO.  present the \sst planner that achieves asymptotic near-optimality and %outperforms the random-tree approach. 

Most recently, Hauser and Zhou~\cite{HauserZ16} proposed a meta algorithm $\texttt{AO-x}$, which allows to adapt any \emph{well-behaved} non-optimal kinodynamic sampling-based planner, denoted by $\texttt{x}$, into an AO algorithm. This is achieved by substituting the $d$-dimensional state space $\X$ on which the former is run with the $(d+1)$-dimensional space $\Y=\X \times \dR$, where the last coordinate encodes the solution cost. Then, $\texttt{x}$ is iteratively applied to shrinking subsets $\Y_i$ of $\Y$ for $i\in\mathbb{N}_+$, where the maximal value of the last coordinate (representing the cost) is gradually decreased with $i$, and hence the cost of the returned solution. The authors combined their framework with the forward-propagating versions of \rrt~\cite{LaVKuf01} and \est~\cite{HsuKLR02}, to yield \aorrt and \aoest, both of which demonstrated favorable performance over competitors. The observation that the cost induced by a system can be analyzed by augmenting the state space in the above manner was first considered by Pontryagin (see,~\cite{pontryagin1987}).

%Unfortunately, the theoretical properties of the aforementioned approach have not been characterized to a satisfactory level. The main issue lies in the fact that the authors make the assumption that $\texttt{x}$ is well behaved, without proving this property for either \rrt or \est.

We follow up on Hauser and Zhou's approach. We augment their work by addressing aspects of the analysis that we believe require more attention, namely what are the precise conditions under which using the augmented-space approach will lead to provably AO solutions. The main issue that we address is the assumption~\cite{HauserZ16} that $\texttt{x}$ is well-behaved, without proving this property for neither \rrt nor for \est. Well behavedness consists of two requirements: (i) \texttt{x} must find a feasible solution eventually within each $\Y_i$---a property corresponding to probabilistic completeness (PC)~\cite{CBHKKLT05}--- and (ii) the cost of the solution found in $\Y_i$ is smaller (with non-negligible probability) than the maximal cost value over~$\Y_i$. Note that requirement (ii) is a particularly strong assumption, essentially requiring \texttt{x} to be ``nearly'' AO, i.e., gradually reducing the cost of the solution when applied to the bounded subspaces $\Y_i$ for $i\in\mathbb{N}_+$.

In this context, it should be noted that some variants of \rrt are not even PC~\cite{KunzS14} (and thus not well behaved). Furthermore, it is not specified for what types of robotic systems~\cite{HauserZ16}, with respect to $\X,f,g$, or problem instances $\F,\xinit,\Xgoal$ this property holds. {Another logical gap that has not been adequately addressed is that the proof focuses on a version of \aorrt which grows multiple trees, and does not seem to directly extend to the single-tree version of \aorrt used in the experiments of that paper.} 

% i) addressing the gaps in the original analysis, ii) focusing the analysis on the actual variant that makes sense experimentally and iii) providing an evaluation of the different variants and comparisons with alternatives in terms of practical performance. 

\subsection{Contribution} 

{We present a novel analysis of AO-RRT: a tree-based planner for motion planning with kinodynamic constraints, originally described by Hauser and Zhou~\cite{HauserZ16}.
We focus on a variant that constructs a single tree, rather than multiple trees, embedded in the augmented state space~$\Y$, and which was not analyzed before. We note that this variant  was  used in the experiments in~\cite{HauserZ16}.
The approach does not require a BVP solver and can be viewed as an AO generalization of the non-AO \rrt planner~\cite{LaVKuf01}. }

{Our main contribution is a rigorous optimality proof for the single-tree \aorrt. Our proof only requires an easily-verifiable set of assumptions on the problem and system: we require  Lipschitz-continuity of the cost function and the dynamics. In particular, we prove that for any system satisfying these assumptions, any trajectory having a piecewise-constant control function and positive clearance from obstacles can be approximated arbitrarily well by a trajectory found by \aorrt.  
\icraVer{}{(We also discuss extensions to trajectories whose control function is not necessarily piecewise constant.)} 
Furthermore, we develop explicit bounds on the convergence rate of the algorithm. Our AO proof relies on the theory that we have recently developed for the probabilistic completeness of \rrt~\cite{KSLBH18}.}

{We also discuss practical aspects of \rrttwo, namely node pruning and a hybrid approach that combines the algorithm with other planners, while still maintaining AO. Then we present an experimental comparison of \rrttwo variants with the vanilla \rrt, and \sst for \icraVer{a fixed-wing plane and a rally car.}{both geometric and kinodynamic scenarios.}}

The paper is organized as follows. The \rrttwo algorithm is described in Section~\ref{sec:rrttwo}. Section~\ref{sec:theoretical_prop} proceeds with the theoretical properties of \rrttwo and gives the asymptotic optimality proof. Practical aspects of the algorithm are discussed in Section~\ref{sec:practical_aspects} and experiments are presented in Section~\ref{sec:experiments}. Finally, in Section~\ref{sec:discussion} we discuss further research.

\section{{The single-tree} AO-RRT algorithm}
\label{sec:rrttwo}
{We describe the single-tree \aorrt approach. Henceforth we will refer to this algorithm simply as \rrttwo.} 
%(but unless otherwise stated, we mean the single-tree variant).}
Recall that $\X,\F,\U$ denote the state, free, and control spaces, respectively. We assume that $\X$ is compact, and $\F$ is open. 
The \rrttwo algorithm is very similar to the (kinodynamic) \rrt algorithm, based on~\cite{LaVKuf01}.
Whereas \rrt grows a tree embedded in $\X$, \rrttwo (see Algorithm~\ref{alg:rrttwo}) does so in the state-cost space. %slightly larger space. 
In particular, we define the augmented (state) space $\Y\coloneqq\X\times \dR_+$, 
which is $(d+1)$-dimensional, where the additional coordinate represents the cost of the (non-augmented) state. That is, a point $y\in \Y$ can be viewed as a pair $y=(x,c)$, where 
$x\in \X$ and $c\geq 0$ represents the cost of the trajectory from $\xinit$ to $x$ over the tree $\T(\Y)$. Given a point $y\in \Y$ we use the notation $x(y),c(y)$ to represent its component of $\X$ and cost, respectively.

The \rrttwo algorithm has the following 
inputs: In addition to an initial start state $\xinit$, goal region $\X_{\text{goal}}$, number of iterations $k$, maximal total duration for propagation $\Tprop$, and control space $\U$, which \rrt accepts, \rrttwo also accepts a maximal cost $\cmax$. 
See Section~\ref{sec:practical_aspects} for more information on how to choose $\cmax$.

\rrttwo constructs a tree $\T(\Y)$, embedded in $\Y$ and rooted in $\yinit=(\xinit,0)$, by performing $k$ iterations of the following form. In each iteration, it generates a random sample $\yrand$ in $\Y$, by randomly sampling $\X$ and the cost space $[0,\cmax]$ (lines~3-4). In addition a random control $\urand$ and duration $\trand$ are generated by calling the routine $\sample$ (lines~5-6). For a given set $S$, the procedure $\sample(S)$ produces a sample uniformly and randomly from $S$.

Next, the nearest neighbor $\ynear$ of $\yrand$ in $\T(\Y)$ is retrieved (line~7). We emphasize that this operation is performed in the $(d+1)$-dimensional space $\Y$ using a suitable distance metric such as the Euclidean metric in the augmented space (see Section~\ref{sec:practical_aspects}). 
Then, in line~8, the algorithm uses a forward propagation approach (using \propagate) from $\ynear$ to generate a new state $\ynew$: the random control input $\urand$ is applied for time duration $\trand$ from $x(\ynear)$ reaching a new state $\xnew\in \X$ through a trajectory $\pinew$. The state $x(\ynear)$ is then coupled with the cost of executing $\pinew$ together with $c(\ynew)$ (line~9). Mathematically, for $x\in \X,u\in \U,t>0$, we have that 
\[\propagate(x, u, t)\coloneqq \int_0^{t}f(x(t),u)\dt.\]

Finally, \collisionfree($\pinew$) checks whether the trajectory reaching $\ynew$ from $\ynear$ using the control $\urand$ and duration $\trand$ is collision free. This operation is known as \emph{local planning}, and is typically achieved by densely sampling the trajectory and applying a dedicated collision detection mechanism~\cite{CRCbookChap39}. If indeed the trajectory is collision free, $\ynew$ is added as a vertex to the tree and is connected by an edge from $\ynear$ (lines 10-12). The trajectory $\pinew$ is also added to the edge. If $\ynew$ is in the goal region and its cost is the smallest encountered so far, then $\ymin$ is substituted with this point (lines~13,14). Finally, a lowest-cost trajectory (if exists) is returned in line~15.
Note that the algorithm maintains the lowest-cost trajectory discovered so far by keeping track of the last vertex $\ymin$ on such a trajectory.

\section{Theoretical properties of AO-RRT}
\label{sec:theoretical_prop}
%\vspace{-5pt}
We %provide details concerning 
spell out
the assumptions that we make with respect to the system and the cost function, and state our main theorem.
Then, in Section~\ref{subsec:augsystem},
we describe the problem in the augmented space $\Y$,  define the augmented system $F$, and study its properties.
We then leverage this in the proof of the main theorem in Section~\ref{subsec:proof}. \icraVer{}{In Section~\ref{sec:optimal_control} we discuss the extension of the theorem to trajectories not necessarily having piecewise-constant control functions.}

Throughout this section we use the following notations. %We denote by $\|\cdot\|$ the standard Euclidean norm.
For simplicity, in our proofs we use the standard Euclidean norm, denoted by $\|\cdot\|$.
We note, however, that all proofs can be generalized to work with the weighted Euclidean norm.
Given a set $S\subseteq \dR^{d'}$, for some $d'>0$, we denote by $|S|$ its Lebesgue measure.
For a given point $y\in \dR^{d'}$, and a radius $r>0$, we use $\B_r^{d'}(y)$ to denote the $d'$-dimensional Euclidean ball of radius $r$ centered at $y$. 

We make the following assumption concerning $f$ (Eq.~\eqref{eq:dyn}):
\begin{assumption}[Lipschitz continuity of the system]\label{assume:f_lip}
%\vspace{-5pt}
The system $f$ is Lipschitz continuous for both
of its arguments.
That is,
$\exists K^f_u, K^f_x > 0 $ s.t. $\forall\ x_0,x_1\in \mathcal{X}, \forall u_0,u_1\in \U$:
\begin{align*}
%\exists K^f_u, K^f_x > 0  \text{ s.t. } \forall\ x_0,x_1\in \mathcal{X}, \forall u_0,u_1\in \U:\\
\| f(x_0, u_0)- f(x_0, u_1)\| \leq K^f_u\| u_0-u_1\|,\\
\| f(x_0, u_0)- f(x_1, u_0)\| \leq K^f_x\| x_0-x_1\|.\end{align*}
\end{assumption}

We make the following assumption concerning $g$ (Eq.~\eqref{eq:cost}):
\begin{assumption}[Lipschitz continuity of the cost]\label{assume:g_lip}
The cost derivative $g$ is Lipschitz continuous for both
of its arguments. That is,
$\exists K^g_u, K^g_x > 0 $ s.t. $\forall\ x_0,x_1\in \mathcal{X},\forall u_0,u_1\in \U$:
\begin{align*}\| g(x_0, u_0)- g(x_0, u_1)\| &\leq K^g_u\| u_0-u_1\|,\\
\| g(x_0, u_0)- g(x_1, u_0)\| &\leq K^g_x\| x_0-x_1\|.\end{align*}
\end{assumption}

\begin{definition}
	A piecewise constant control function $\overline{\Upsilon}$  with resolution $\Delta t$ is the concatenation of constant control functions $\bar{\Upsilon}_i : [0, \Delta t ] \rightarrow u_i$, where $u_i\in \U$, and $1\leq i\leq k$, for some $k\in \mathbb{N}_{>0}$. \label{def:picewise}
\end{definition}

From this point on, when we say a \emph{valid trajectory} we mean a valid trajectory as described in Section~\ref{sec:introduction}, with the extra proviso that the control function is piecewise constant. 

\begin{definition}
    Let $\pi$ be a valid trajectory, and let $T$ be its duration. We define the clearance of $\pi$ to be the maximal value $\delta>0$ such that 
    \[\bigcup_{t\in [0,T]}\B^d_{\delta}(\pi(t))\subset \F\text{ and } \B_\delta^d(\pi(T))\subset \Xgoal.\]
    We say that a trajectory is \emph{robust} if its clearance is positive. %greater than $0$.
\end{definition}

We arrive to our main contribution that establishes the rate of convergence of \rrttwo.

\begin{theorem}
Assume that Assumptions~\ref{assume:f_lip}, \ref{assume:g_lip} hold and fix \mbox{$\eps\in(0,1)$}. Denote by $\pi_k$ the solution obtained by \rrttwo after $k$ iterations. For every robust trajectory $\pi$ having a piecewise-constant control function there exist a finite $k_0\in\mathbb{N},a>0,b>0$, such that for every $k>k_0$ it holds~that 
\begin{align*}\Pr[\cost(\pi_k)>(1+\eps)\cost (\pi)]\leq ae^{-bk}.\end{align*}
\label{thm:main}
\end{theorem}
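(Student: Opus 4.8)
The plan is to reduce the asymptotic-optimality statement to a probabilistic-completeness statement for ordinary \rrt run in the augmented space $\Y$, and then invoke the machinery of~\cite{KSLBH18}. The first step is to make precise the observation that \rrttwo is nothing but \rrt executed with respect to the augmented system $F(y,u) \coloneqq (f(x,u),\, g(x,u))$ on the state space $\Y = \X\times\dR_+$, with free space $\F\times[0,\cmax]$: the nearest-neighbour query, the forward propagation, and the collision check of Algorithm~\ref{alg:rrttwo} are precisely those of \rrt, only lifted to $\Y$. Crucially, because the last coordinate is propagated through $g$, the cost component of every tree vertex equals the true accumulated cost of the tree trajectory reaching it; hence controlling the position of a tree vertex in $\Y$ simultaneously controls both its state and its cost.

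Next I would fix a robust trajectory $\pi$ with piecewise-constant control $\Upsilon$ and clearance $\delta>0$, and lift it to the augmented trajectory $\bar\pi(t) = \big(\pi(t),\ \int_0^t g(\pi(s),\Upsilon(s))\,\mathrm{d}s\big)$, which is a trajectory of $F$ driven by the same piecewise-constant control and ends at $\bar\pi(T) = (\pi(T),\cost(\pi))$. The goal is to show that $\bar\pi$ is itself robust in $\Y$: its state coordinates inherit clearance $\delta$ from $\pi$, and since $\Y$ carries obstacles only in the state coordinates, a tube of radius comparable to $\delta$ around $\bar\pi$ stays inside $\F\times[0,\cmax]$, provided $\cmax$ exceeds $\cost(\pi)$ by a fixed margin. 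Here one uses Assumptions~\ref{assume:f_lip} and~\ref{assume:g_lip}, collected into a Lipschitz bound for $F$ in Section~\ref{subsec:augsystem}, which is exactly the hypothesis required by the completeness theory.

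With $\bar\pi$ robust and $F$ Lipschitz, I would apply the quantitative probabilistic-completeness result of~\cite{KSLBH18} to the pair $(\Y,F)$: for any target accuracy $\eta>0$ there exist a finite $k_0$ and constants $a,b>0$ such that for every $k>k_0$, \rrt produces with probability at least $1-ae^{-bk}$ a tree trajectory $\bar\pi_k$ whose endpoint satisfies $\|\bar\pi_k(T)-\bar\pi(T)\|<\eta$. Translating back, a small endpoint error in $\Y$ means the state endpoint $\pi_k(T)$ lies within $\eta$ of $\pi(T)$ and the cost endpoint lies within $\eta$ of $\cost(\pi)$. Choosing $\eta \coloneqq \min\{\delta,\ \eps\,\cost(\pi)\}$ (or a fixed fraction thereof) then forces $\pi_k(T)\in\Xgoal$ by the clearance of the goal ball, and simultaneously $\cost(\pi_k)\leq(1+\eps)\cost(\pi)$; since this accuracy is a fixed positive number determined by $\eps,\delta,\cost(\pi)$, the constants $k_0,a,b$ furnished by~\cite{KSLBH18} yield precisely the claimed bound.

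The main obstacle I anticipate is the clearance bookkeeping in the augmented space, specifically the behaviour of the cost coordinate near the two boundaries $c=0$ and $c=\cmax$: unlike the state coordinates, the cost axis is one-sided and the lifted trajectory begins exactly on the boundary $c=0$, so a naive tube around $\bar\pi$ protrudes outside $\Y$ at the root. Resolving this cleanly---either by absorbing the root segment into the first propagation phase, by thickening only in the state coordinates where obstacles actually live, or by verifying that the argument of~\cite{KSLBH18} needs clearance only along the interior of the trajectory---is the delicate point on which the whole reduction rests. The remaining work, namely verifying the Lipschitz constant of $F$ and chaining the per-phase success probabilities into the exponential tail, is routine once the completeness theorem is in hand.
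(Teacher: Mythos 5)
Your proposal follows essentially the same route as the paper: view \rrttwo as plain \rrt running on the augmented Lipschitz system $F$ over $\Y$ (the paper's Claim~\ref{claim:F_lip}), lift the robust trajectory into $\Y$, reuse the ball-covering and nearest-neighbor/propagation machinery of~\cite{KSLBH18} to chain Bernoulli successes, and choose the accuracy radius $\min(\delta,\eps\,\cost(\pi))$ so that reaching the endpoint ball simultaneously forces goal membership and the $(1+\eps)$ cost bound. The boundary issue you flag at $c=0$ is benign rather than load-bearing: tree vertices automatically have non-negative cost (since $g\geq 0$), and sample balls clipped to $\X\times[0,\cmax]$ retain positive measure, so only the constants change, exactly as in the paper's argument.
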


\begin{algorithm}[b!]
	\caption{${\tt {AO-RRT}}(\xinit, \Xgoal, k, \Tprop, \U, \cmax)$}
	\begin{algorithmic}[1]
		\State{$\yinit\gets (\xinit,0); \T(\Y).\init(\yinit); \ymin=(\nulll,\infty)$}
		\For {$i = 1 \text{ to } k$}
		\State $\xrand\gets \sample(\X)$ \Comment{sample state}
		\State $\crand \gets \sample([0,\cmax])$ \Comment{sample cost}
		\State $\trand \gets \sample([0, \Tprop])$ \Comment{sample duration}
		\State $\urand \gets \sample(\U)$ \Comment{sample control}
		\State $\ynear\gets \nearest(\yrand=(\xrand,\crand),\T(\Y))$  
		\State $(\xnew,\pinew) \gets \propagate(x(\ynear), \urand, \trand)$
		\State $\cnew \gets c(\ynear) + \cost(\pinew)$
		\If {\collisionfree($\pinew$)} 
		\State{$\T(\Y).\addvertex(\ynew=(\xnew,\cnew))$}
		\State{$\T(\Y).\addedge(\ynear, \ynew, \pinew)$}
		\If {$x(\ynew )\in \Xgoal$ and $c(\ynew)<c(\ymin)$}
		\State $\ymin \gets\ynew$ 
		\EndIf
		\EndIf
		\EndFor	
		\State 	\Return $\tracepath(\T(\Y),\ymin)$
	\end{algorithmic}
	\label{alg:rrttwo}
	%\vspace{-10pt}
\end{algorithm}

\vspace{-30pt}
\subsection{Properties of the augmented system}
\label{subsec:augsystem}
It would be convenient to view the problem of optimal planning with respect to $f,g$, as a feasible motion planning for an augmented system $F$, which is defined as follows. The \emph{augmented} system $F$ encompasses both types of transitions in $f$ and $g$, respectively. The control space for this system is simply $\U$, and its state space is $\Y=\X\times \dR_+$. Formally, 
\begin{equation}
\dot{y}=(\dot{x},\dot{c})=F(y, u)=(f(x,u),g(x,u)),
\label{eq:dyn_ext}
\end{equation} 
for $y=(x,c)$, where $x\in \mathcal{X}, c\in \mathbb{R}_+, u\in \U$.  

We have the following claim with respect to $F$: 
\begin{claim}\label{claim:F_lip}
Under Assumptions~\ref{assume:f_lip},\ref{assume:g_lip}, the augmented system $F$ is Lipschitz continuous for both
of its arguments. That is,
$\exists K_u, K_x > 0 $ s.t. $\forall\ y_0,y_1\in \mathcal{Y},u_0,u_1\in \U$:
\begin{align*}
\| F(y_0, u_0)- F(y_0, u_1)\| \leq K_u\| u_0-u_1\|,\\
\| F(y_0, u_0)- F(y_1, u_0)\| \leq K_x\| y_0-y_1\|.
\end{align*}
\end{claim}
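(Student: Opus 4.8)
The plan is to verify the two Lipschitz inequalities directly, exploiting the fact that $F(y,u)=(f(x,u),g(x,u))$ is simply a concatenation of the two subsystems and, crucially, does not depend on the cost coordinate $c$ of $y=(x,c)$. The single observation that drives everything is that the Euclidean norm of a stacked vector splits blockwise: for $F=(f,g)$ one has $\|F(y_0,u_0)-F(y_1,u_1)\|^2=\|f(x_0,u_0)-f(x_1,u_1)\|^2+\|g(x_0,u_0)-g(x_1,u_1)\|^2$ (the second block being a scalar). This reduces both claimed inequalities to combining the component bounds furnished by Assumptions~\ref{assume:f_lip} and \ref{assume:g_lip}.

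For the control-Lipschitz inequality I would fix $y_0=(x_0,c_0)$ and apply the first inequality of each assumption at the common state $x_0$, giving $\|f(x_0,u_0)-f(x_0,u_1)\|\le K^f_u\|u_0-u_1\|$ and $\|g(x_0,u_0)-g(x_0,u_1)\|\le K^g_u\|u_0-u_1\|$. Substituting into the blockwise split yields $\|F(y_0,u_0)-F(y_0,u_1)\|\le \sqrt{(K^f_u)^2+(K^g_u)^2}\,\|u_0-u_1\|$, so one may take $K_u=\sqrt{(K^f_u)^2+(K^g_u)^2}$.

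For the state-Lipschitz inequality I would apply the second inequality of each assumption with fixed control $u_0$, obtaining a bound proportional to $\|x_0-x_1\|$, namely $\|F(y_0,u_0)-F(y_1,u_0)\|\le \sqrt{(K^f_x)^2+(K^g_x)^2}\,\|x_0-x_1\|$. The only point deserving care is that the claim requires the right-hand side to be measured in $\|y_0-y_1\|$, not $\|x_0-x_1\|$: since $F$ ignores the cost coordinate, the bound a priori involves only the $\X$-difference, and I would close the gap by noting that the projection of $\Y$ onto $\X$ is $1$-Lipschitz, i.e. $\|x_0-x_1\|\le\|y_0-y_1\|$ because $\|y_0-y_1\|^2=\|x_0-x_1\|^2+|c_0-c_1|^2$. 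Hence $K_x=\sqrt{(K^f_x)^2+(K^g_x)^2}$ works.

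There is no genuine obstacle here; the statement is essentially bookkeeping. The one thing to stay attentive to is that the state-Lipschitz constant must be stated against the full augmented-state distance $\|y_0-y_1\|$, which is legitimate precisely because the nonnegative term $|c_0-c_1|^2$ only enlarges the right-hand side. Should one prefer the weighted Euclidean norm alluded to in the text, the identical argument carries through after rescaling the constants by the relevant weights.
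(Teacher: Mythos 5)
Your proof is correct and follows essentially the same route as the paper: blockwise splitting of the Euclidean norm to obtain $K_u=\sqrt{(K^f_u)^2+(K^g_u)^2}$ and $K_x=\sqrt{(K^f_x)^2+(K^g_x)^2}$, then closing the state inequality via $\|x_0-x_1\|\leq\|y_0-y_1\|$. Your write-up is in fact slightly more explicit than the paper's, which leaves the constant in the state-Lipschitz bound implicit.
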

\begin{proof} It follows that
{\small\begin{align*}
 \| F(& y_0, u_0) - F(y_0, u_1)\| \\ &= \sqrt{\| f(x_0, u_0)- f(x_0, u_1)\|^2 + \| g(x_0, u_0)- g(x_0, u_1)\|^2} \\
&\leq \sqrt{(K^f_u\| u_0-u_1\|)^2 + \left(K^g_u\| u_0-u_1\|\right)^2} \\
&= \sqrt{(K^f_u)^2 + (K^g_u)^2}\cdot \| u_0-u_1 \|= K_u \cdot \| u_0-u_1 \|,
\end{align*}}for $K_u\eqq \sqrt{(K^f_u)^2 + (K^g_u)^2}$.

The second inequality requires an additional transition since
$\| F(y_0, u_0) - F(y_1, u_0)\|\leq K_x\cdot \| x_0-x_1 \|$.  It remains to use the fact that $\|x_0-x_1\| \leq  \|y_0-y_1\|$.
%$\| F(y_0, u_0) - F(y_1, u_0)\|\leq \sqrt{K_x^2\cdot \| x_0-x_1 \|^2} = K_x\cdot \| x_0-x_1 \|$.  It remains to use the fact that $\|x_0-x_1\|^2 \leq  \|y_0-y_1\|^2$.
\end{proof}

We can think of \rrttwo planning for the system $f$ with cost $g$, state space $\X$ and control space $\U$, as the standard \rrt operating over the system $F$, state space $\Y$, and control space $\U$. Lines 8-9 in Algorithm~\ref{alg:rrttwo}
are identical to propagating with $F$. This equivalence allows to exploit useful properties of \rrt recently developed in~\cite{KSLBH18}. 

\subsection{Proof of Theorem~\ref{thm:main}}
\label{subsec:proof}
We first provide an outline of the proof.
Fix $\eps\in (0,1)$ and let $\pi_\delta$ be a robust trajectory whose clearance is $\delta>0$. The clearance is with respect to both distance from the obstacles, and from the boundary of the goal region. 
%Denote by $$ the resolution of $\pi_\delta$ (recall Definition~\ref{def:picewise}). 
Let $c_\delta \coloneqq  \cost(\pi_\delta)$.
We draw $\pi_\delta$ in the $(d+1)$-dimensional space $\Y$, such that the new trajectory $\pi^{\Y}_\delta$ begins in $\yinit = (\xinit, 0)$ and ends in $\ygoal = (\xgoal, c_\delta)$, where $\xgoal\in \Xgoal$.
Next, similarly to~\cite{KSLBH18}, we place a constant number of balls of radius $r=\min(\eps c_\delta, \delta)$ along the trajectory $\pi^{\Y}_\delta$.  The balls are constructed in a manner that guarantees that each such transition is collision free.
Then we show that with high probability \rrttwo will visit all such balls as the number of samples $k$ tends to infinity, by transitioning from one ball to the next incrementally. Reaching the last ball, centered at $\ygoal$, implies that \rrttwo will find a solution whose cost is at most $c_\delta+r \leq c_\delta + \eps c_\delta = (1+\eps)c_\delta$, since by definition any trajectory in $\Y$ that terminates in $\B_r^{d+1}(\ygoal)$ must have a cost (which is its $(d+1)$th coordinate) of at most $c_\delta+r$.

To achieve this, we first adapt with minor changes the following two lemmatta from~\cite{KSLBH18} to the setting of \rrttwo. Lemma~\ref{lem:prop_bound} shows that there exists a constant $\tau\leq\Tprop$ such that if we place the centers of the balls along $\pi^{\Y}_\delta$ where the duration between two consecutive centers is $\tau$ then the probability for successfully propagating from one ball to the next is positive (assuming that  the propagation duration and the control input are chosen uniformly at random).
We note that it follows from~\cite{KSLBH18} that $\tau$ can be chosen such that 
$\pi^{\Y}_\delta$ can be divided into sub-trajectories of duration $\tau$, where the control function is fixed during each sub-trajectory.

\begin{lemma}
There exists $\tau\leq\Tprop$ %such that $\tau \cdot \ell=\Delta t_\delta$, for some $\ell\in \dN_+$, 
for which the following holds: Let $\pi$ be a trajectory for $F$ with clearance $\delta>0$ and a control function that is fixed during the interval $[0,\tau]$.  Let $r\leq\delta$.
Let $\trand$ be a random duration sampled uniformly from $[0,\Tprop]$, and $\urand$ a uniformly sampled control input from $\U$.
Suppose that the propagation step of \rrttwo begins at state $\ynear \in \B^{d+1}_r(\pi(0))$ and ends in $\ynew$ (lines 8,9 in Algorithm~\ref{alg:rrttwo}). Then
	    \[p_{\textup{prop}}\eqq \Pr\left[\ynew \in \B^{d+1}_{2r/5} (\pi(\tau))\right]>0.\]
	    \vspace{-15pt}
	    \label{lem:prop_bound}
\end{lemma}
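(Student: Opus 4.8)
The plan is to reduce the statement to a perturbation estimate for the flow of the augmented system $F$ and then convert that estimate into a positive-measure (hence positive-probability) statement about the random control and duration. By the equivalence noted after Claim~\ref{claim:F_lip}, a propagation step of \rrttwo is exactly a forward integration of $\dot y = F(y,u)$ with constant control, so throughout I work with $F$ and use that it is Lipschitz in both arguments with constants $K_x,K_u$. Write $\bar u$ for the constant value of the control function of $\pi$ on $[0,\tau]$, so that the nominal flow starting at $\pi(0)$ under $\bar u$ reaches $\pi(\tau)$ at time $\tau$. Let $G(y_0,u,t)$ denote the endpoint of integrating $F$ from $y_0$ with constant control $u$ for duration $t$; the event of interest is $G(\ynear,\urand,\trand)\in\B^{d+1}_{2r/5}(\pi(\tau))$.

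First I would establish a quantitative comparison bound. Applying Gronwall's inequality to the difference of two solutions of $\dot y = F(y,u)$ --- one started at $\ynear$ with control $\urand$ and the other being $\pi$ with control $\bar u$ --- and accounting separately for the mismatch in duration gives an estimate of the form
\[
\|\,G(\ynear,\urand,\trand)-\pi(\tau)\,\|\;\le\;\|\ynear-\pi(0)\|\,e^{K_x\tau}\;+\;\tfrac{K_u}{K_x}\,\|\urand-\bar u\|\,\bigl(e^{K_x\tau}-1\bigr)\;+\;M\,|\trand-\tau|,
\]
where $M\eqq\sup\|F\|$ over the compact set $\X\times\U$, finite by continuity of $f,g$. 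In particular $G$ is continuous in $(y_0,u,t)$, and each error term is governed by a parameter we control: the initial offset $\le r$, the control offset, and the duration offset.

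The core of the argument is then to exhibit a single triple $(\ynear,\hat u,\hat t)$ with $G(\ynear,\hat u,\hat t)$ lying in the \emph{open} ball $\B^{d+1}_{2r/5}(\pi(\tau))$, after which continuity of $G$ immediately furnishes an open neighborhood of $(\hat u,\hat t)$ in $\U\times[0,\Tprop]$ whose image stays in that open ball; since $\urand$ and $\trand$ are drawn uniformly from sets of positive Lebesgue measure, this neighborhood has positive probability and $p_{\textup{prop}}>0$ follows. I expect this existence step to be the main obstacle, precisely because the target radius $2r/5$ is \emph{smaller} than the radius $r$ of the starting ball: the offset term $\|\ynear-\pi(0)\|e^{K_x\tau}$ can be as large as $r\,e^{K_x\tau}$, so one cannot simply take $\hat u=\bar u,\ \hat t=\tau$. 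This forces a joint choice of $\tau\le\Tprop$ that is small enough for the exponential factors in the comparison bound to be close to $1$, yet large enough that varying the control and duration over their admissible ranges lets the reachable set from $\ynear$ enter $\B^{d+1}_{2r/5}(\pi(\tau))$; the quantitative tradeoff yielding the constant $2/5$ is exactly the estimate adapted from~\cite{KSLBH18}. A point worth checking separately is that the monotone cost coordinate $\dot c = g\ge 0$ of $F$ does not obstruct reaching the target cost, which holds because $\ynear$ starts below $\pi(\tau)$ in cost (by at most $r$) and the cost can still be increased to match. Finally, since $r\le\delta$, the clearance of $\pi$ guarantees that the perturbed trajectory, staying within distance $r$ of $\pi$, remains in $\F$, so the propagation is collision-free and the reached vertex is actually added to $\T(\Y)$.
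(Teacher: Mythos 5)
Your scaffolding is the same route the paper itself takes: the paper never proves Lemma~\ref{lem:prop_bound} from first principles either, but establishes Claim~\ref{claim:F_lip} so that the augmented system $F$ is Lipschitz and then imports the propagation lemma of~\cite{KSLBH18} ``with minor changes.'' Your Gronwall comparison bound, and the reduction of $p_{\textup{prop}}>0$ to exhibiting a single pair $(\hat u,\hat t)$ whose endpoint lies in the open target ball (continuity of the endpoint map plus positive measure of a neighborhood in $\U\times[0,\Tprop]$), are exactly the soft ingredients of that argument, and they are correct.

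The gap is the step you yourself call ``the main obstacle'' and then hand back to~\cite{KSLBH18}: producing $(\hat u,\hat t)$ whose endpoint lies in $\B^{d+1}_{2r/5}(\pi(\tau))$ when $\ynear$ may be at distance nearly $r$ from $\pi(0)$. This is not a quantitative tradeoff that a sharper estimate or a cleverer choice of $\tau$ can deliver, because under Assumptions~\ref{assume:f_lip} and~\ref{assume:g_lip} alone the required statement fails: Lipschitz continuity gives only an \emph{upper} bound on the deviation and provides no controllability with which to cancel an initial offset. Concretely, take $f\equiv(1,0,\dots,0)$ and $g\equiv 1$, so that $F$ is a constant vector field (Lipschitz with any constants). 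Every propagation translates $\ynear$ by $t\cdot F$, so the component of $\ynear-\pi(0)$ orthogonal to $F$ is preserved exactly; if that component has norm $4r/5$, then $\|\ynew-\pi(\tau)\|\geq 4r/5>2r/5$ for every $(\urand,\trand)$ and every $\tau$, i.e., $p_{\textup{prop}}=0$. Hence no $\tau$ that is ``small enough for the exponentials yet large enough for reachability'' exists, and the mechanism you sketch cannot close the gap. What actually has to carry the induction in Theorem~\ref{thm:main} is structure absent from the lemma's hypotheses: the states to which the lemma is applied are not arbitrary points of $\B^{d+1}_r(\pi(0))$ but vertices generated by propagation chains along $\pi^{\Y}_\delta$ starting from $\yinit=\pi^{\Y}_\delta(0)$ exactly, so one can track the accumulated error globally (starting from $0$ and growing by a controlled factor per step over the fixed number $m$ of steps, with the sampling, control, and duration tolerances chosen small) instead of demanding a per-step contraction from radius $r$ to $2r/5$. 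A complete proof must be organized that way, or the lemma restated with those extra hypotheses; as written, your proposal rests on a contraction that the cited estimate cannot supply.
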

Lemma~\ref{lem:nn_prob} shows that the probability that the nearest neighbor of a random sample $\yrand$ lies in a specific ball is positive, when  $\yrand$ is sampled uniformly at random from~$\Y$.
\begin{lemma}
    Let $y\in \Y$ be such that $\B^{d+1}_r(y)\subset \F_\Y\eqq \F\times \dR_+$.
    Suppose that there exists an \rrttwo vertex $v\in \B^{d+1}_{2r/5}(y)$.
    Let $\ynear$ denote the nearest neighbor of $\yrand$ among all \rrttwo vertices.
    Then
        \[p_{\textup{near}}\eqq \Pr\left[\ynear\in \B^{d+1}_{r}(y)\right]\geq |\B^{d+1}_{r/5}|/|\Y|. \]
        \vspace{-15pt}
    \label{lem:nn_prob}
    
\end{lemma}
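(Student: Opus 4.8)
The plan is to reduce this probabilistic statement to a single favorable sampling event together with a short chain of triangle inequalities. I would introduce the \emph{good event} $E \eqq \{\yrand \in \B^{d+1}_{r/5}(y)\}$, namely that the uniform sample lands inside the concentric ball of radius $r/5$ about $y$. The argument then splits cleanly into two independent parts: first, a deterministic geometric claim that $E$ forces the nearest neighbor $\ynear$ into $\B^{d+1}_r(y)$ regardless of the (fixed) tree configuration; and second, an exact evaluation of $\Pr[E]$ from the uniform sampling density. Note that the only randomness is in the location of $\yrand$, so the probability is over this sample for an arbitrary current vertex set satisfying the hypothesis.

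For the deterministic implication I would use the hypothesized vertex $v \in \B^{d+1}_{2r/5}(y)$ as a witness. On the event $E$ we have $\|\yrand - y\| \leq r/5$ and $\|v - y\| \leq 2r/5$, so by the triangle inequality $\|\yrand - v\| \leq 3r/5$. Since $\ynear$ is by definition the tree vertex closest to $\yrand$, and $v$ is one admissible candidate, we get $\|\yrand - \ynear\| \leq \|\yrand - v\| \leq 3r/5$. A second application of the triangle inequality yields $\|\ynear - y\| \leq \|\ynear - \yrand\| + \|\yrand - y\| \leq 3r/5 + r/5 = 4r/5 \leq r$, i.e. $\ynear \in \B^{d+1}_r(y)$. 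Hence $E \subseteq \{\ynear \in \B^{d+1}_r(y)\}$ as events, and therefore $p_{\textup{near}} \geq \Pr[E]$. The specific radii $2r/5$ and $r/5$ are precisely what make the accumulated error $4r/5 \leq r$, which is why these fractions are matched to those in Lemma~\ref{lem:prop_bound}.

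It remains to evaluate $\Pr[E]$. Because $\yrand$ is drawn uniformly over $\Y$, this probability is the ratio of the measure of $\B^{d+1}_{r/5}(y)$, intersected with the sampling domain, to $|\Y|$. Here I invoke the hypothesis $\B^{d+1}_r(y) \subset \F_\Y$: since $r/5 < r$, the smaller concentric ball satisfies $\B^{d+1}_{r/5}(y) \subset \B^{d+1}_r(y) \subset \F_\Y \subseteq \Y$, so it lies entirely inside the region over which we sample and its intersection with that region has full measure $|\B^{d+1}_{r/5}|$. Thus $\Pr[E] = |\B^{d+1}_{r/5}|/|\Y|$, and combining with the previous paragraph gives $p_{\textup{near}} \geq |\B^{d+1}_{r/5}|/|\Y|$, as claimed.

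The argument is short, and the only point requiring care is the containment step: I must ensure that the \emph{entire} ball $\B^{d+1}_{r/5}(y)$, not merely a fraction of it, lies inside the domain from which $\yrand$ is drawn, since this is what upgrades the conclusion from a bound involving a truncated measure to the clean quantity $|\B^{d+1}_{r/5}|$. This is guaranteed by the clearance hypothesis $\B^{d+1}_r(y) \subset \F_\Y$, together with the fact that, in the setting where the lemma is applied, $y$ lies on the drawn trajectory $\pi^{\Y}_\delta$ whose cost coordinate remains within the sampled cost range $[0,\cmax]$.
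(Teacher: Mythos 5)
Your proof is correct and takes essentially the same route as the paper: the paper itself does not spell out a proof of Lemma~\ref{lem:nn_prob} but adapts it from~\cite{KSLBH18}, where the argument is exactly your combination of the good event $\yrand\in\B^{d+1}_{r/5}(y)$, the triangle-inequality chain through the witness vertex $v$ giving $\|\ynear-y\|\leq 2r/5+r/5+r/5=4r/5\leq r$, and the uniform-sampling volume ratio $|\B^{d+1}_{r/5}|/|\Y|$. Your closing caveat about the ball lying inside the sampled region (in particular the cost coordinate staying within $[0,\cmax]$) is a real subtlety that the paper glosses over as well, and you handle it appropriately.
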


Note that both probabilities $p_{\textup{prop}}, p_{\textup{near}}$ are independent of the number $k$ of iterations of the algorithm. Next, we will 
place $m+1$ balls of radius $r=\min(\eps c_\delta, \delta)$ centered at states $y_0=\yinit, y_1,\ldots , y_m = \ygoal$ along the trajectory $\pi^{\Y}_\delta$. See Figure~\ref{fig:covering_balls} for an illustration.

\begin{figure}[t]
    \centering
    \vspace{10pt}
    \includegraphics[width=\columnwidth]{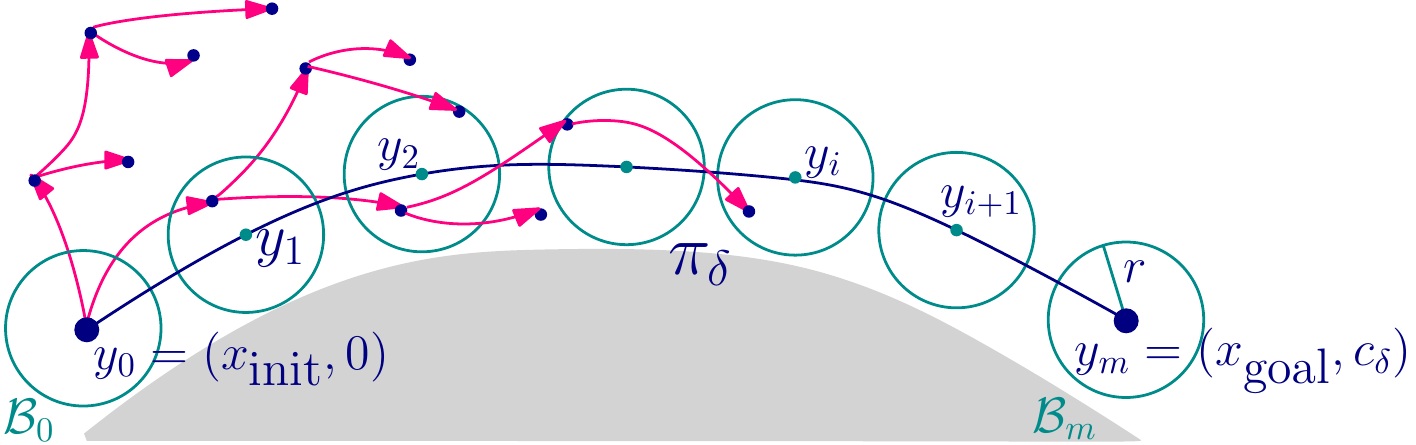}
    \vspace{-20pt}
    \caption{Illustration of the proof of Theorem~\ref{thm:main}.}
    \label{fig:covering_balls}
    \vspace{-15pt}
\end{figure}

Denote by $t_\delta$ the duration of $\pi_\delta$. We determine the sequence of points $y_0,\ldots, y_m$ in the following manner: Choose a set of durations $t_0 = 0, t_1, t_2, \ldots  , t_m = t_\delta$, such that the difference between every two consecutive
ones is $\tau$ (see Lemma~\ref{lem:prop_bound}). That is, let $y_0 = \pi^{\Y}_\delta(t_0), y_1 = \pi^{\Y}_\delta(t_1 ), \ldots , y_m = \pi^{\Y}_\delta(t_\delta)$
be states along the path $\pi^{\Y}_\delta$ that are 
obtained after duration $t_0, t_1, \ldots , t_m$, respectively.
Obviously, $m = t_\delta/\tau$ is some constant independent
of the number of samples.

Suppose that there exists an \rrttwo vertex
$v \in \B^{d+1}_{2r/5} (y_i) \subset \B^{d+1}_{r} (y_i)$. 
We shall bound the probability $p$
that in the next iteration the \rrttwo tree will extend from a 
vertex in $\B^{d+1}_{r}(y_i)$, given that a vertex in $B^{d+1}_{2r/5} (y_i)$ exists, and that the propagation step will add a vertex to $\B^{d+1}_{2r/5} (y_{i+1})$.
That is, $p$ is the probability that in the next iteration both
$y_\text{near} \in \B^{d+1}_{r} (y_i)$ and $y_\text{new}\in \B^{d+1}_{2r/5} (y_{i+1})$. 
From Lemma~\ref{lem:nn_prob} we 
have that the probability that $y_\text{near}$ lies in $\B^{d+1}_{r} (y_i)$, given that
there exists an RRT vertex in $\B^{d+1}_{2r/5} (y_i)$, is at least 
$p_{\textup{near}}$. Next, we wish to sample duration $\trand$ and control $\urand$ such that a random propagation from $\ynear$ will yield $y_\text{new}\in \B^{d+1}_{2r/5} (y_{i+1})$. According to Lemma~\ref{lem:prop_bound}, the probability for this to occur is at least $p_{\textup{prop}}$. %Thus, jointly the probability of selecting a proper $\ynear$ and sampling the correct propagation
Thus, jointly the probability that $\ynear$ falls in $\B^{d+1}_{r} (y_i)$ and of sampling the correct propagation
duration and control is at least $p=p_{\textup{near}}\cdot p_{\textup{prop}}>0$. As we mentioned earlier, this value is also independent of the number of iterations.

It remains to bound the probability of having $m$ successful such steps. This process can be described as $k$ Bernoulli trials with success probability $p$. The  planning problem can be solved after $m$ successful outcomes, where the $i$th outcome adds an \rrttwo vertex in $\B^{d+1}_{2r/5}(y_i)$. Let $X_k$ denote the number of successes in $k$ trials. %According to~\cite{KSLBH18}, 
As in~\cite{KSLBH18}, we have that 
\[\Pr[X_k < m] = \sum_{i=0}^{m-1}{\binom{k}{i}p^i(1-p)^{k-i}}\nonumber\leq ae^{-bk},\]
where $a$ and $b$ are  positive constants. This concludes the proof of Theorem~\ref{thm:main}.
\icraVer{{In the full version of the paper~\cite{kleinbort2019rrt20} we show that Theorem~\ref{thm:main} is not limited to trajectories with piecewise-constant control functions.}
}{
\subsection{Beyond piecewise-constant control}\label{sec:optimal_control}
Theorem~\ref{thm:main} argues that for any robust trajectory $\pi$ having a  piecewise-constant control function, with high probability \rrttwo will find a trajectory %$\pi$
whose control function is piecewise constant and whose cost is at most $(1+\eps)\cost(\pi)$, where $\eps>0$ is a constant.

Next, we show that this theorem is not limited to trajectories with piecewise-constant control functions. Let $\pi^*$ be the optimal trajectory with respect to $\cost$, with control function $u^*$ and duration $T^*$. Notice that $\pi^*$ is not necessarily robust, $u^*$ is not necessarily piecewise constant. Nevertheless we show that such $\pi^*$ can be approximated arbitrarily well, with respect to $\alpha>0$, using a trajectory $\pi_\alpha$, which has piecewise-constant control. This implies that to achieve a cost close to $\cost(\pi^*)$ it suffices to apply Theorem~\ref{thm:main} and get close to~$\pi_\alpha$.

The following proposition states that for any optimal solution $(T^*,\pi^*,u^*)$, which satisfies certain assumptions, (1) for any $\alpha>0$, there exists a robust solution $(T_{u_\alpha},\pi_{u_\alpha}, u_\alpha)$, where the control function $u_\alpha$ is in $L^{\infty}$, i.e., bounded, but not necessarily piecewise constant, and the cost of $\pi_{u_\alpha}$ is at most $1+\alpha$ times the cost of $\pi^*$. This statement is then used to prove part (2) of the proposition, which asserts that a similar result holds even when $u_\alpha$ is piecewise constant. In the following, it would be convenient to represent the free space as 
$\F = \{ x \in \mathbb{R}^d | \constraints(x) \le 0 \}$, where $\constraints(x)$ can be interpreted as the negative value of the clearance of $x$. 

\begin{proposition} \label{ref:PropDensity}
Assume that $\F = \{ x \in \mathbb{R}^d | \constraints(x) \le 0 \}$, where $\constraints : \mathbb{R}^d \rightarrow \mathbb{R}$ is of class $C^1$, and that $\U = \mathbb{R}^D$. Assume also that the dynamics $f$ and the cost $\cost$ are $C^1$ functions, and that there exists an optimal strategy $(T^*,\pi^*,u^*)$ that has a unique extremal which is moreover normal. Then, the following holds:
\begin{enumerate}
\item For every $\var > 0$, there exist $\delta_{\var} > 0$ and a control $u_{\var} \in L^{\infty}([0,T_{\var}],\U)$ such that the related trajectory $\pi_{u_{\var}}$ is defined in $[0,T_{\var}]$ and satisfies
\begin{align*}
&|\cost(\pi^*) - \cost(\pi_{u_{\var}})| < \var,  &\pi_{u_{\var}}(T_{\var}) = \pi^*(T^*),
\end{align*}
 and $\constraints(\pi_{u_{\var}}(t)) = -\delta_{\var} < 0,$ for $t \in [0,T_{\var}]$.
 \item For every $\var > 0$, there exist $\delta_{\var} > 0$ and a piecewise-constant control $u_{\var}$ defined in $[0,T_{\var}]$ such that the related trajectory $\pi_{u_{\var}}$ is defined in $[0,T_{\var}]$ and satisfies
\begin{align*}
&|\cost(\pi^*) - \cost(\pi_{u_{\var}})| < \var, \\ &\| \pi_{u_{\var}}(T_{\var}) - \pi^*(T^*) \| < \var,
\end{align*}
 and $\constraints(\pi_{u_{\var}}(t)) = -\delta_{\var} < 0,$ for $t \in [0,T_{\var}]$.
\end{enumerate}
\end{proposition}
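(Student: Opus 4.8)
The plan is to prove the two parts in sequence, treating part~(1) as the substantive step and part~(2) as a density argument built on top of it. The hypotheses would be used as follows. Because $\U=\mathbb{R}^D$, the Pontryagin maximum principle becomes an \emph{interior} stationarity condition $\partial_u H = 0$ rather than a boundary maximization, where $H$ is the Hamiltonian of the augmented problem. Normality ($p^0\neq 0$) rules out the degenerate abnormal case and, together with the $C^1$ regularity of $f$, $\cost$ and $\constraints$, yields that the extremal control $u^*$ is continuous (hence $\pi^*$ is $C^1$) and that $\pi^*$ is a regular point of the end-point mapping, i.e. the linearized control system along $\pi^*$ is controllable. Uniqueness of the extremal guarantees that any sensitivity or continuation argument tracking solutions of the Hamiltonian boundary-value problem actually tracks the optimum, so that small perturbations of the problem data produce solutions converging to $(\pi^*,u^*)$.

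For part~(1), the only obstruction is that $\pi^*$ may graze the boundary, i.e. $\constraints(\pi^*(t))=0$ for some $t$, so its clearance may vanish. I would construct $\pi_{u_\var}$ by pushing $\pi^*$ slightly into the interior while keeping the terminal state fixed at $\pi^*(T^*)$. Concretely, I would look for a perturbation of the form $\pi_\var = \pi^* + \delta\,\eta + o(\delta)$, with a profile $\eta$ pointing strictly inward (decreasing $\constraints$) along the trajectory, and solve for a control perturbation realizing it. Normality supplies the controllability of the linearization, so the end-point map restricted to admissible control perturbations is a submersion; applying the implicit function theorem I can choose the control so that (a) the trajectory is displaced inward by an amount $\Theta(\delta)$, giving uniform clearance $\delta_\var>0$ (so that $\constraints(\pi_{u_\var}(t))\le -\delta_\var<0$), and (b) the terminal condition $\pi_{u_\var}(T_\var)=\pi^*(T^*)$ holds exactly. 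Since $f$, $\constraints$ and the cost integrand are $C^1$ and the perturbation is $O(\delta)$, one gets $|\cost(\pi^*)-\cost(\pi_{u_\var})|=O(\delta)$, so taking $\delta$ small enough makes this gap below $\var$. The resulting $u_\var$ lies in $L^{\infty}$ but need not be piecewise constant.

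For part~(2), I would approximate the bounded control $u_\var$ from part~(1) by a piecewise-constant control $\tilde u$ with sufficiently fine resolution $\Delta t$. Density of step functions in $L^1([0,T_\var],\U)$ makes $\|u_\var-\tilde u\|_{L^1}$ arbitrarily small, and continuous dependence of trajectories on the control, via Gronwall's inequality using the Lipschitz constants of $f$ (Assumption~\ref{assume:f_lip}), makes both $\sup_t\|\pi_{\tilde u}(t)-\pi_{u_\var}(t)\|$ and $|\cost(\pi_{\tilde u})-\cost(\pi_{u_\var})|$ arbitrarily small. Choosing the approximation finer than the clearance $\delta_\var$ of part~(1) keeps $\pi_{\tilde u}$ inside $\F$ with clearance at least $\delta_\var/2$, so $\pi_{\tilde u}$ is robust; the terminal state now satisfies only $\|\pi_{\tilde u}(T_\var)-\pi^*(T^*)\|<\var$ rather than exact equality, which is precisely the weakening in the statement of part~(2). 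Combining both errors with the bound from part~(1) yields $|\cost(\pi^*)-\cost(\pi_{\tilde u})|<\var$, completing the approximation that feeds into Theorem~\ref{thm:main}.

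The main obstacle is part~(1): simultaneously (i) forcing strictly positive clearance everywhere along the trajectory, (ii) hitting the terminal state exactly, and (iii) paying only $O(\delta)$ in cost. This is where normality is indispensable, since it is exactly the non-degeneracy that makes the inward push and the fixed-endpoint condition jointly solvable through the implicit function theorem, and uniqueness of the extremal is what lets the implicit-function solution be identified with a genuine near-optimal trajectory. A secondary subtlety I expect to require care is establishing the regularity of $u^*$ and the controllability of the linearization from the $C^1$ hypotheses and normality, and controlling the behavior near the contact set $\{t:\constraints(\pi^*(t))=0\}$, which may be geometrically complicated.
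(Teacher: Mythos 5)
Your proposal follows essentially the same route as the paper: part (1) via an implicit-function-theorem argument in infinite-dimensional Banach spaces, with surjectivity of the linearized mapping supplied by the unique-normal-extremal hypothesis, and part (2) by approximating the resulting $L^{\infty}$ control with piecewise-constant controls and invoking continuous dependence of trajectories and cost on the control. The differences are only in packaging: the paper encodes your conditions (i)--(ii) as a single equation by introducing the tightened constraints $\constraints_{\delta} = \constraints + \delta$ and an End-Point Mapping $E(\delta,u)$ valued in $\mathbb{R}^d \times C^0([0,T],\mathbb{R})$, to which the surjective implicit function theorem is applied directly, and it uses density of piecewise-constant controls in the $L^{\infty}$ topology where you use $L^1$-density plus Gronwall.
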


For this, we recall that, given a feasible strategy $(T,\pi,u)$ for the motion planning problem, a related {\em extremal} $(T,\pi,u,p,\mu,p_0)$, where $p^0 \le 0$ is constant, $p : [0,T] \rightarrow \mathbb{R}^d$ is an absolutely continuous function, and $\mu$ is a non decreasing function of bounded variation, is by definition a quantity satisfying the {\em Pontryagin Maximum Principle} \cite{pontryagin1987,dmitruk2009development}, i.e., the Pontryagin adjoint equations, maximality and transmission conditions (see \cite{dmitruk2009development} for precise definitions). The Pontryagin Maximum Principle is a necessary condition for optimality, therefore, to any optimal solution $(T^*,\pi^*,u^*)$ it is associated a non-trivial extremal $(T^*,\pi^*,u^*,p^*,\mu^*,p^*_0)$. An important class of extremals are the so-called normal extremals, that by definition satisfy $p^0 \neq 0$.

For what concerns Proposition \ref{ref:PropDensity}, the assumption on the existence of a unique normal extremal requires some (informal) comments. Normal extremals naturally exist for optimal control problems and are often unique (see, e.g., \cite{chitour2008singular,trelat2000some}). Their uniqueness is related to the regularity of solutions to the Hamilton-Jacobi-Bellman equation: a smooth solution provide an (at least locally) unique normal extremal (see, e.g., \cite{trelat2000some}). Since the regularity of the solutions to the Hamilton-Jacobi-Bellman equation are related to the regularity of the data, enough regular dynamics, cost and scenario (i.e., at least $C^1$) provide the existence of unique normal extremals.

The proof of Proposition \ref{ref:PropDensity} makes use of the surjective form of the {\em Implicit Function Theorem} in infinite dimensional Banach spaces (see, e.g., \cite{antoine1990etude}). The assumption on the existence of a unique normal extremal will be crucial to apply the theorem to our framework. Below, we provide a sketch-of-proof considering fixed final time $T$ (for free final time $T$, the proof goes similarly with slight modifications, see also \cite[pp. 310--314]{lee1967foundations}). \\

\noindent {\em Sketch-of-proof of Proposition \ref{ref:PropDensity}:} 
Consider fixed final time $T$ (therefore, with the notation in Proposition \ref{ref:PropDensity}, $T = T^* = T_{\alpha}$) and let us introduce the following new family of constraints:
\begin{equation} \label{constraints}
\constraints_{\delta}(x) := \constraints(x) + \delta
\end{equation}
where $\delta \in \mathbb{R}$. Since $\pi^*$ is defined in $[0,T]$, it is easy to prove that, by multiplying the dynamics $f$ by {\em smooth cut-off functions} (see, e.g., \cite{lee2001introduction}) around $\pi^*$, for every control $u \in L^{\infty}([0,T],\U)$, the related trajectory $\pi_u$ is defined in the whole interval $[0,T]$ (see, e.g., \cite{trelat2000some}). Therefore, the following infinite-dimensional, parameter-dependent {\em End-Point Mapping} is correctly defined
\begin{align*}E : [0,1] \times L^{\infty}([0,T],\U) \rightarrow \mathbb{R}^d \times C^0([0,T],\mathbb{R})\\
(\delta,u) \mapsto \Big( \pi_u(T) - \pi^*(T), \constraints_{\delta}(\pi_u(\cdot)) - \constraints(\pi^*(\cdot)) \Big).\end{align*}
Moreover, by the differentiability of $\pi_u$ with respect to $u$ (see, e.g., \cite{trelat2000some}), the mapping $E$ is $C^1$. Remark that to obtain such differentiability properties we need to ask that $f$, $\constraints$ and $\cost$ are $C^1$, which is among our first assumptions (the $C^1$ regularity of $\cost$ is required for the existence of any Pontryagin extremal in the smooth case, see, e.g., \cite{dmitruk2009development}).

At this step, we make use of the surjective form of the Implicit Function Theorem in infinite dimensional Banach spaces applied to the End-Point Mapping $E$ above. The theorem can be applied because we assume the existence of a unique and moreover normal extremal related to $(T,\pi^*,u^*)$, which implies that the differential with respect to $u$ of $E$ at $(0,u^*)$ is surjective. From this, by adapting the framework considered in \cite{haberkorn2011convergence,bonalli2019continuity} (that is, replacing control constraints with pure state constraints), one proves that there exist $r > 0$ and a continuous mapping $\varphi : [0,r) \rightarrow L^{\infty}([0,T],\mathbb{R}^D)$ (with respect to the topology of $L^{\infty}$, see, e.g., \cite{brezis2010functional}) such that $\varphi(0) = u^*$ and $E(\delta,\varphi(\delta)) = 0$ for every $\delta \in [0,r)^k$. In other words:
 \begin{equation} \label{controls}
\begin{split}
&\forall \ \delta \in [0,r): \ \pi_{\varphi(\delta)}(T) = \pi^*(T),\\
&\constraints_{\delta}(\pi_{\varphi(\delta)}(t)) = \constraints(\pi^*(t)) \le 0, \ t \in [0,T].
\end{split}
\end{equation}
Now, by denoting $u_{\delta} := \varphi(\delta) \in L^{\infty}([0,T],\mathbb{R}^D)$, the continuity of $\varphi$ (with respect to $\delta$), of $\pi$ (with respect to $u$) and of $\cost$ (with respect to $\pi$) under appropriate topologies gives that, for $\var > 0$ there exists $\delta_{\var} \in (0,r)^k$ such that
$$
|\cost(\pi^*) - \cost(\pi_{u_{\delta_{\var}}})| < \var,
$$
which together with \eqref{constraints} and \eqref{controls} provides the first claim.

To obtain the second claim, we just need to approximate controls $u_{\delta}$ above with piecewise constant controls. Similarly to above, since $\cost$ is continuous with respect to the topology of $L^{\infty}$, if we fix $\var > 0$, there exists $s_{\var} > 0$ such that $|\cost(\pi^*) - \cost(\pi_u)| < \var$ for every control $u \in L^{\infty}([0,T],\mathbb{R}^D)$ for which $\| u^* - u \|_{L^{\infty}} < s_{\var}$. Now, thanks to the continuity in $L^{\infty}$ of the mapping $\varphi$ and the fact that $\varphi(0) = u^*$, there exists $\delta_{\var} \in (0,r)^k$ such that $\| u^* - u_{\delta_{\var}} \| < s_{\var}/4$. Now, recall that the set of piecewise constant %in $[0,T]$ 
functions is dense in $L^{\infty}([0,T],\mathbb{R}^D)$. This means that there exists a piecewise-constant control $u_{\var}$ % $u_{\var} \in \overline{\Upsilon}$ 
such that $\| u_{\delta_{\var}} - u_{\var} \|_{L^{\infty}} < s_{\var}/4$. Importantly, up to reducing the value of $s_{\var} > 0$, the continuity of trajectories $\pi$ with respect to $u$ (in the topology of $L^{\infty}$) gives that $\pi_{u_{\var}}$ is defined in the whole interval $[0,T]$ (use smooth cut-off functions as above) and that the following holds by \eqref{controls} and the continuity of $\constraints$:
\begin{align*}\| \pi_{u_{\var}}(T) - \pi^*(T) \| < \var, \ \constraints(\pi_{u_{\var}}(t)) = -\bar \delta_{\var} < 0, \ t \in [0,T]\end{align*}
for a given $\bar \delta_{\var} > 0$. Since $\| u^* - u_{\var} \|_{L^{\infty}} \le \| u^* - u_{\delta_{\var}} \|_{L^{\infty}} + \| u_{\delta_{\var}} - u_{\var} \|_{L^{\infty}} < s_{\var}/4 + s_{\var}/4 = s_{\var}/2 < s_{\var}$, from above
$|\cost(\pi^*) - \cost(\pi_{u_{\var}})| < \var$
and the conclusion follows. 
}

\section{Practical aspects of AO-RRT}
\label{sec:practical_aspects}
We discuss several approaches to {potentially} speed up the performance of \rrttwo in practice, while retaining its AO property.
\vspace{5pt}

\noindent\textbf{Cost sampling.} \rrttwo samples $(d+1)$-dimensional points from the augmented space $\Y$ by randomly sampling $\X$ and the cost space $[0, c_\text{max}]$, where $c_\text{max}$ provides an upper bound on the maximal cost of the solution.  Setting $c_\text{max}$ to be much larger than the cost of existing tree vertices may bias the $\nearest$ procedure towards selecting vertices with high cost, which may affect the time to find an initial solution. Thus, we propose to set $c_\text{max}$ to be the maximal cost among the tree nodes, until an initial solution is found. Then, we can fix $c_\text{max}$ to be the cost of the solution.

\vspace{5pt}

\noindent\textbf{Augmented-space metric.} In some applications the coordinates of the $\X$-component and the cost component in $\Y$ may be on different scales, which can bias $\nearest$ procedure towards either the cost or the $\X$ component. This in turn may affect the behaviour of the algorithm and its convergence rate. Thus, we propose to use a Weighted Euclidean metric for $\nearest$, defined as
\begin{equation} \label{eq:dist_func}{\textsc {dist}}(y_a,y_b)\coloneqq\sqrt{w_x \|x_a - x_b\|^2 + w_c |c_a - c_b|^2 }, 
\end{equation}
where $y_a = (x_a, c_a), y_b = (x_b, c_b) \in \Y$.
%We note that the choice of weights $w_x,w_c$ may bias of the $\nearest$ method to either the cost or the $\X$ component. This in turn may affect the behaviour of the algorithm and its convergence rate. %
%We set the weights $w_x, w_c$ to be user-defined. However, we note that 
To avoid biasing, $w_x,w_c$ should be chosen such that the maximal possible squared distance between the $\X$ components and the maximal possible squared distance between costs would be of the same order. Note that this weighted version can be viewed as using an unweighted version on an augmented space $\Y'$ in which the cost coordinate has been rescaled. Thus, the theoretical analysis presented in the previous section holds for the weighted version as-is.
\vspace{5pt}

\noindent\textbf{Node pruning.} After a solution of some cost $c>0$ is found, existing tree nodes whose  cost-to-come is greater than $c$ cannot participate in the returned solution, or in a solution of better cost. Such vertices can therefore be removed from the tree. We emphasize that the proof of the previous section still applies to this setting, as after pruning, the probability to grow the tree from a certain node whose cost-to-come value is at most $c$ only increases. % That is, the probability $p_\textup{near}$, and as a result also the probability $p$ for a successful propagation,  do not decrease after pruning, and may even increase.
%Therefore, the proof of Theorem~\ref{thm:main} still holds when a node pruning step is performed whenever a solution is found.
%Moreover, as $p$ may increase, it is possible that the algorithm with pruning will converge to the optimum more quickly. 
%The proof of Theorem~\ref{thm:main} relies on Lemma~\ref{lem:nn_prob}, which states that the probability %p_\textup{near}$ to grow the  \rrttwo tree from a certain node is bounded from below by some positive value $s$.
%Moreover, we claim that this value is independent of both the number of iterations $k$ of the algorithm and the number of tree nodes.
%
%
%\subsection{An AO hybrid approach}
%\label{subsec:hybrid}
\vspace{5pt}

\noindent\textbf{Hybrid planning.} 
As the performance of sampling-based planners varies from one scenario to another, we propose a hybrid approach \hybrrttwo, combining \rrttwo with other planners. This approach may perform better in scenarios where \rrttwo struggles to find a solution, and effectively guide the planning and expedite the convergence towards the optimum. 
%lternates between \rrttwo and other planners and may perform better than \rrttwo in some scenarios. 
\hybrrttwo combines \rrttwo with an additional tree planner, termed $\texttt{PLN}$, while operating in the augmented space $\Y$. It extends the constructed tree by alternating between \rrttwo and $\texttt{PLN}$. Each node added to the tree is assigned with a cost value, as in \rrttwo.

\hybrrttwo is AO since by applying \rrttwo every other iteration we still have a positive probability $p' = p/2$ for a successful transition from $\B^{d+1}_i$ to $\B^{d+1}_{i+1}$, for every $i$.
Moreover, the addition of tree nodes due to the steps of the other planner $\texttt{PLN}$ does not affect the transition probability $p'$.
We note that $\texttt{PLN}$ is not required to be AO, nor is it assumed to be PC. This is in the spirit of Multi-Heuristic~A*~\cite{AineSNHL15}, where multiple inadmissible heuristic functions are used simultaneously with a single consistent heuristic to preserve guarantees on completeness of the search.

\ignore{We propose two concrete examples of the hybrid approach.
The first, \hybrrttwostride, 
alternates between \rrttwo and \stride~\cite{GipsonMK13}.
\stride uses a data structure that enables it
to produce density estimates in the full state
space.  More precisely, it samples a configuration $s$, biased towards relatively unexplored
areas of the state space. The tree is then grown from $s$, if possible. \stride was not shown to be AO.
\hybrrttwostride maintains, as \stride does, a data structure for states in the augmented space $\Y$ and alternates between the two methods for choosing the node $\ynear$ to grow the tree from.
Once the node is chosen, the algorithm proceeds as \rrttwo (line 7 in Algorithm~\ref{alg:rrttwo} is replaced with the \stride method of choosing $\ynear$).
Similarly, we define \hybrrttwoest, which alternates between \rrttwo and \est~\cite{HsuKLR02}. The latter attempts to detect and expand from less explored areas of the space through the use of a grid imposed on a projection of the state space. }

\vspace{-5pt}
\section{Experimental results}
\label{sec:experiments}
\icraVer{%
{We present an experimental evaluation of the performance
of \rrttwo.
%on kinodynamic scenarios. 
%An extended evaluation of geometric scenarios as well as other kinodynamic scenarios can be found in~\cite{kleinbort2019rrt20}.
%
Our experiments were conducted on an Intel(R) Xeon(R) CPU E5-1660 v3\@3.00GHz with 32GB of memory. 
We set the optimization objective to be the duration of the trajectory. 
%Throughout the experiments we used equal weights ($w_x = w_c$) in the distance metric for all \aorrt variants.
Throughout the experiments we assign $w_x = w_c$ in the distance metric 
(Eq.~\eqref{eq:dist_func}) for all \aorrt variants.
Experiments with different weighting schemes or with additional scenarios
are provided in~\cite{kleinbort2019rrt20}.}

%We compare the following algorithms within the \aorrt framework:
{We compare the algorithms within the \aorrt framework with \rrt~\cite{LaVKuf01} and \sst~\cite{LiETAL16}, which have weaker guarantees. The \aorrt variants that we used are:
(i) \aorrtrebuilding---the algorithm analyzed in~\cite{HauserZ16}, (ii) \aorrtnopruning---an implementation of~Algorithm~\ref{alg:rrttwo}, (iii) \aorrtprune, which performs node pruning, and (iv) \hybrrttwostride. 
The latter is a hybrid planner, as described in Section~\ref{sec:practical_aspects}, combining \rrttwo and \stride~\cite{GipsonMK13}.
We mention that \stride was originally defined for geometric settings and was not shown to be AO. \stride uses a data structure that enables it
to produce density estimates in the full state
space. More precisely, it samples a configuration $s$, biased towards relatively unexplored areas of the state space. The tree is then grown from $s$, if possible. \hybrrttwostride maintains, as \stride does, a data structure for states in the augmented space $\Y$ and alternates between the two methods for choosing the node $\ynear$ to grow the tree from.
Once the node is chosen, the algorithm proceeds as \rrttwo does.}

{For each planner we report on both the success rate and the minimum solution cost averaged over all successful runs, displaying values within one standard deviation of the mean. Each result is averaged over 50~runs. Note that since we plot the average cost over all successful runs, we may observe for a certain planner an increase in the average cost. This is only possible if the success rate increases as well.}

\begin{figure}
    \centering
    \vspace{2pt}
    \includegraphics[width=0.22\textwidth]{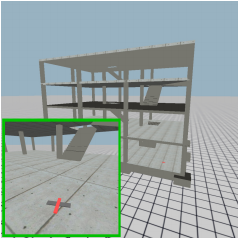} 
   \includegraphics[width=0.21\textwidth]{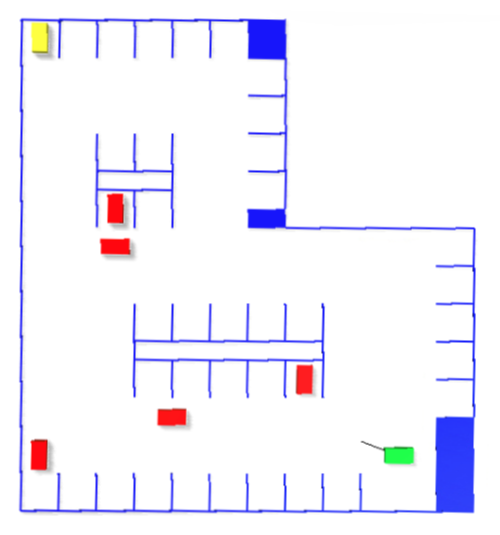}
    \vspace{-5pt}
    \caption{Scenarios: Fixed-wing airplane in a building (left), and rally car in a parking lot (right).}
    \label{fig:scenarios}
    %\vspace{-5pt}
\end{figure}

{The first scenario involves 
a fixed-wing 2nd-order airplane moving through a building with
tight stairwells to reach the top floor (see Figure~\ref{fig:scenarios}, left). The state space is nine-dimensional.
The task space is the $x,y,z$ location of the
fixed-wing airplane. 
We present the results in Figure~\ref{fig:fixed_wing_results}.}
\begin{figure}[!t]
\centering
\input{fixedwing.tex}
\vspace{-5pt}
    \caption{Plots for %Planning for 
    a fixed-wing airplane %operating in a building 
    (see Figure~\ref{fig:scenarios}, left). 
    }
    %\vspace{-20pt}
\label{fig:fixed_wing_results}
\end{figure}
{An additional scenario (Figure~\ref{fig:scenarios}, right)
involves a rally car (green) moving through a parking lot trying to reach a parking space (yellow) while avoiding other static cars and obstacles. The state space is eight-dimensional, while the task space consists of the 2D pose ($x,y,\theta$) of the car.
We present the results in Figure~\ref{fig:rallycar_results}.}

\begin{figure}[ht]
\centering
\input{car.tex}
\vspace{-5pt}
    \caption{Plots for a rally car (see Figure~\ref{fig:scenarios}, right). 
    }
    \label{fig:rallycar_results}
\end{figure}

{These two experiments demonstrate that, as expected, all \aorrt variants improve their solution as a function of time. However, single-tree variants within the \aorrt framework perform better than the \aorrtrebuilding approach. 
This further justifies the dedicated analysis for the single-tree \aorrt.
Additionally, we observe that \aorrt and \aorrtprune, differing in the addition of a pruning step, find solutions of similar quality, while the former obtains a slightly better success rate. The variance of the solutions found by the hybrid planner is higher than that of the other approaches for lower success rates.
}

{Moreover, when compared to \rrt, which is not AO, all \aorrt variants were able to find solutions of better quality.
The comparison against \sst, which is near-AO, yielded different results; for the fixed-wing scenario all \aorrt variants had better success rates and obtained better costs. 
For the rally car, all the single-tree \aorrt variants and \sst found comparable solutions, with a slight advantage to \sst.}

}
{
We present an experimental evaluation of the performance
of \rrttwo on both geometric and kinodynamic scenarios.
Our experiments were conducted on an Intel(R) Xeon(R) CPU E5-1660 v3\@3.00GHz with 32GB of memory. 
We set the optimization objective to be the duration of the trajectory. 
Unless otherwise stated, throughout the experiments we assign $w_x = w_c$ in the distance metric 
(Eq.~\eqref{eq:dist_func}) for all \aorrt variants.

We first visualize the behaviour of \rrttwo (Alg.~\ref{alg:rrttwo}) in a simple geometric setting.
We run \rrttwo for 120 seconds in a simple 2D environment consisting of a disc robot moving among rectangular obstacles (see Figure~\ref{fig:rrtstar_geom_experiment}). The constants in Eq.~\eqref{eq:dist_func} were set to $w_x=1,w_c=0.2$.  We depict the paths found during the run. As the number of samples increases the paths improve, gradually converging to the optimal path.
\begin{figure}[t]
    \vspace{8pt}
    \centering
    \includegraphics[width=0.4\textwidth]{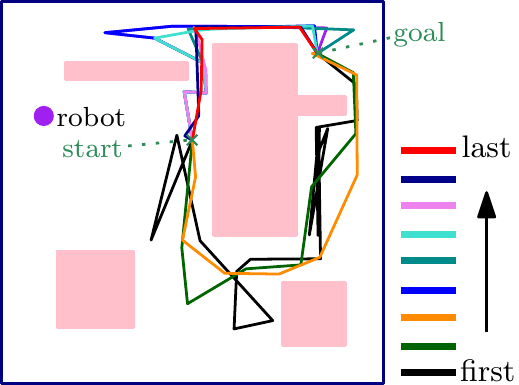}
    \caption{Planning for a disc robot in a 2D geometric setting using \rrttwo for 120 seconds. Obstacles are depicted in light magenta, while the disc robot is depicted in purple. Start and goal positions are marked with a green cross. The Paths found by \rrttwo during the fixed time budget are drawn. The paths converge to the optimum as the number of samples increases.}
    \vspace{-3pt}
    \label{fig:rrtstar_geom_experiment}
\end{figure}

%We compare the following algorithms within the \aorrt framework:
Next, we compare the algorithms within the \aorrt framework with \rrt~\cite{LaVKuf01} and \sst~\cite{LiETAL16}, which have weaker guarantees. The \aorrt variants that we used are:
(i) \aorrtrebuilding---the algorithm analyzed in~\cite{HauserZ16}, (ii) \aorrtnopruning---an implementation of~Algorithm~\ref{alg:rrttwo}, (iii) \aorrtprune, which performs node pruning, and (iv) \hybrrttwostride. 
The latter is a hybrid planner, as described in Section~\ref{sec:practical_aspects}, combining \rrttwo and \stride~\cite{GipsonMK13}.
We mention that \stride was originally defined for geometric settings and was not shown to be AO. \stride uses a data structure that enables it
to produce density estimates in the full state
space. More precisely, it samples a configuration $s$, biased towards relatively unexplored areas of the state space. The tree is then grown from $s$, if possible. \hybrrttwostride maintains, as \stride does, a data structure for states in the augmented space $\Y$ and alternates between the two methods for choosing the node $\ynear$ to grow the tree from.
Once the node is chosen, the algorithm proceeds as \rrttwo does.

For each planner we report on both the success rate and the minimum solution cost averaged over all successful runs, displaying values within one standard deviation of the mean. Each result is averaged over 50~runs. Note that since we plot the average cost over all successful runs, we may observe for a certain planner an increase in the average cost. This is only possible if the success rate increases as well.

\begin{figure*}
%\vspace{6pt}
    \centering
    \includegraphics[width=0.21\textwidth]{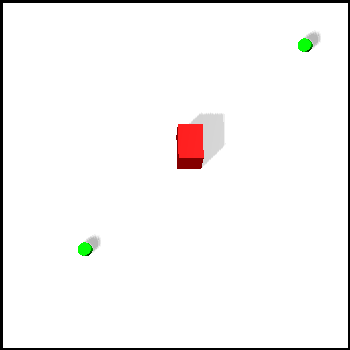}
    \includegraphics[width=0.21\textwidth]{fig/building_2.png} 
   \includegraphics[width=0.20\textwidth]{fig/rally_car.png}
    \vspace{-5pt}
    \caption{Scenarios: Geometric 2D point robot (left), fixed-wing airplane in a building (middle), and rally car in a parking lot (right).}
    \label{fig:kinoscenarios}
    \vspace{-5pt}
\end{figure*}

We begin with a simple geometric scenario involving a point robot translating in a 2D environment consisting of a single rectangular obstacle (see Figure~\ref{fig:kinoscenarios}, left).
Figure~\ref{fig:pointrobot_results} depicts the results.
Indeed, all \aorrt variants improve their solution as a function of time and are comparable in terms of performance. In fact, these variants were able to find the best solutions among all tested planners.
The hybrid planner obtains better solutions quicker, possibly due to the \stride component included in it that enhances the exploration in geometric settings.
\rrt was inferior in terms of cost and success rate when compared to the other planners. 

\begin{figure}[!t]
\centering
\input{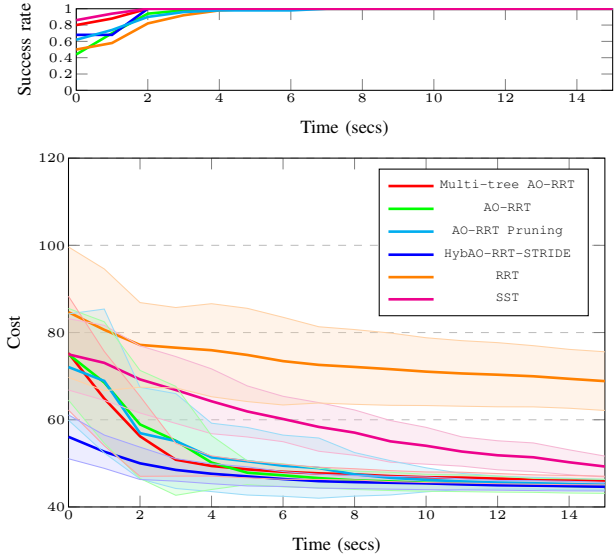}
\vspace{-5pt}
    \caption{Plots for 
     a geometric scenario: a point robot operating in a 2D environment (see Figure~\ref{fig:kinoscenarios}, left).
    }
    %\vspace{-20pt}
\label{fig:pointrobot_results}
\end{figure}

Then we consider two kinodynamic scenarios. The first involves 
a fixed-wing 2nd-order airplane moving through a building with
tight stairwells to reach the top floor (see Figure~\ref{fig:kinoscenarios}, middle). The state space is nine-dimensional.
The task space is the $x,y,z$ location of the
fixed-wing airplane. 
We present the results in Figure~\ref{fig:fixed_wing_results}.
\begin{figure}[!t]
\centering
\input{fixedwing.tex}
\vspace{-5pt}
    \caption{Plots for %Planning for 
    a fixed-wing airplane %operating in a building 
    (see Figure~\ref{fig:kinoscenarios}, middle). 
    }
    %\vspace{-20pt}
\label{fig:fixed_wing_results}
\end{figure}

An additional scenario (Figure~\ref{fig:kinoscenarios}, right)
involves a rally car (green) moving through a parking lot trying to reach a parking space (yellow) while avoiding other static cars and obstacles. The state space is eight-dimensional, while the task space consists of the 2D pose ($x,y,\theta$) of the car.
We present the results in Figure~\ref{fig:rallycar_results}.

\begin{figure}[ht]
\centering
\input{car.tex}
\vspace{-5pt}
    \caption{Plots for a rally car (see Figure~\ref{fig:kinoscenarios}, right). 
    }
%\vspace{-15pt}
    \label{fig:rallycar_results}
\end{figure}

These two experiments demonstrate that, as expected, all \aorrt variants improve their solution as a function of time. However, single-tree variants within the \aorrt framework perform better than the \aorrtrebuilding approach. 
This further justifies the dedicated analysis for the single-tree \aorrt.
Additionally, we observe that \aorrt and \aorrtprune, differing in the addition of a pruning step, find solutions of similar quality, while the former obtains a slightly better success rate. The variance of the solutions found by the hybrid planner is higher than that of the other approaches for lower success rates.

Moreover, when compared to \rrt, which is not AO, all \aorrt variants were able to find solutions of better quality.
The comparison against \sst, which is near-AO, yielded different results; for the fixed-wing scenario all \aorrt variants had better success rates and obtained better costs. 
%On the other hand, in the rally-car scenario \sst found the best solutions. Nevertheless, those found by \aorrt variants were rather close in terms of quality.
For the rally car, all the single-tree \aorrt variants and \sst found comparable solutions, with a slight advantage to \sst.

Finally, we present experiments examining the effect of the weighting scheme used in the state-cost space distance metric (Eq.~\eqref{eq:dist_func}).
We ran \aorrtprune with different weighting schemes on the fixed-wing airplane scenario (Figure~\ref{fig:kinoscenarios}, middle). We plot the success rate and the cost averaged over all successful runs in Figure~\ref{fig:fixedwing_weight_results}.
The plots demonstrate that the choice of weights may affect the convergence rate of the algorithm. Note that with weights $w_x=1.0, w_c = 0.0$  the algorithm acts like vanilla \rrt, showing almost no improvement in cost. However, all runs with $w_c>0$ were able to converge to the optimum.

% \begin{figure}[ht]
% \centering
% \input{2dpoint_weights.tex}
% \vspace{-5pt}
%     \caption{The effect of the weights $w_x,w_c$ in  
%      a geometric scenario.
%     }
%     %\vspace{-20pt}
% \label{fig:pointrobot_weight_results}
% \end{figure}

\begin{figure}[ht]
\centering
\input{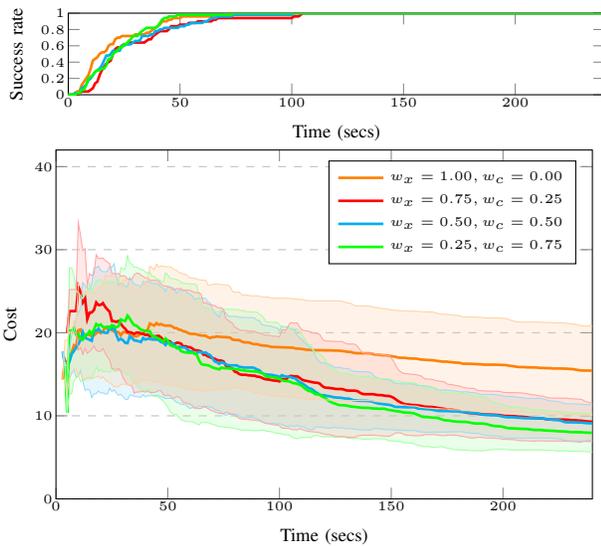}
\vspace{-5pt}
    \caption{The effect of the weights $w_x,w_c$ for a fixed-wing airplane (see Figure~\ref{fig:kinoscenarios}, middle).
    }
    %\vspace{-20pt}
\label{fig:fixedwing_weight_results}
\end{figure}

}

\section{Discussion}
\label{sec:discussion}
This work analyzed the desirable theoretical properties of \rrttwo, which is a method for kinodynamic sampling-based motion planning. In particular, relaxed sufficient conditions have been identified for which \rrttwo is asymptotically optimal. 
%We also demonstrated experimentally that \rrttwo compares favorably with the existing alternatives. 

Future work will extend the framework to manifold-type constraints. In this case, one has to consider the notion of Riemannian distance instead of the more classic Euclidean distance that we use here.
%Another interesting extension of this work would be to adapt our proof to accommodate deterministic sampling distributions (see~\cite{JanIP18}).

%In our implementation we used a weighted distance function in the augmented space (Eq.~\eqref{eq:dist_func}). 
We noticed that \rrttwo's performance strongly depends on the choice of weights $w_x,w_c$ used by the distance function in Eq.~\eqref{eq:dist_func}. It would be desirable to come up with an automatic scheme to choose these weights, or even modify them on-the-fly in order to obtain favorable results.

Another possible research direction involves a deeper examination of the properties of the hybrid approach. This could shed light on the settings in which the hybrid planner has an advantage over \rrttwo.

Finally, the following question, concerning the sampling scheme used by the algorithm, arises from this work:  Is it possible to replace the uniform sampling of durations, or states with a different sampling method to converge more quickly to the optimum in the state-cost space?

\vspace{-5pt}

%
% ---- Bibliography ----
%
\bibliographystyle{IEEEtran}
\bibliography{bibliography}

% Generated by IEEEtran.bst, version: 1.14 (2015/08/26)
\begin{thebibliography}{10}
\providecommand{\url}[1]{#1}
\csname url@samestyle\endcsname
\providecommand{\newblock}{\relax}
\providecommand{\bibinfo}[2]{#2}
\providecommand{\BIBentrySTDinterwordspacing}{\spaceskip=0pt\relax}
\providecommand{\BIBentryALTinterwordstretchfactor}{4}
\providecommand{\BIBentryALTinterwordspacing}{\spaceskip=\fontdimen2\font plus
\BIBentryALTinterwordstretchfactor\fontdimen3\font minus
  \fontdimen4\font\relax}
\providecommand{\BIBforeignlanguage}[2]{{%
\expandafter\ifx\csname l@#1\endcsname\relax
\typeout{** WARNING: IEEEtran.bst: No hyphenation pattern has been}%
\typeout{** loaded for the language `#1'. Using the pattern for}%
\typeout{** the default language instead.}%
\else
\language=\csname l@#1\endcsname
\fi
#2}}
\providecommand{\BIBdecl}{\relax}
\BIBdecl

\bibitem{KavrakiLaValle08}
L.~E. Kavraki and S.~M. LaValle, ``Motion planning,'' in \emph{Springer
  Handbook of Robotics}, B.~Siciliano and O.~Khatib, Eds., 2008.

\bibitem{LaValle06}
S.~M. LaValle, \emph{Planning Algorithms}.\hskip 1em plus 0.5em minus
  0.4em\relax Cambridge Univ. Press, 2006.

\bibitem{KF11}
S.~Karaman and E.~Frazzoli, ``Sampling-based algorithms for optimal motion
  planning,'' \emph{IJRR}, vol.~30, no.~7, pp. 846--894, 2011.

\bibitem{JSCP15}
L.~Janson, E.~Schmerling, A.~A. Clark, and M.~Pavone, ``Fast marching tree: {A}
  fast marching sampling-based method for optimal motion planning in many
  dimensions,'' \emph{IJRR}, vol.~34, no.~7, 2015.

\bibitem{SH15}
O.~Salzman and D.~Halperin, ``Asymptotically-optimal motion planning using
  lower bounds on cost,'' in \emph{{ICRA}}, 2015, pp. 4167--4172.

\bibitem{GSB15}
J.~D. Gammell, S.~S. Srinivasa, and T.~D. Barfoot, ``Batch informed trees
  ({BIT}*): Sampling-based optimal planning via the heuristically guided search
  of implicit random geometric graphs,'' in \emph{ICRA}, 2015, pp. 3067--3074.

\bibitem{SolHal16}
K.~Solovey and D.~Halperin, ``Sampling-based bottleneck pathfinding with
  applications to {F}r{\'{e}}chet matching,'' in \emph{{ESA}}, 2016, pp.
  76:1--76:16.

\bibitem{SJP15}
E.~Schmerling, L.~Janson, and M.~Pavone, ``Optimal sampling-based motion
  planning under differential constraints: The drift case with linear affine
  dynamics,'' in \emph{CDC}, 2015, pp. 2574--2581.

\bibitem{SJP_ICRA15}
------, ``Optimal sampling-based motion planning under differential
  constraints: The driftless case,'' in \emph{ICRA}, 2015, pp. 2368--2375.

\bibitem{KarFra10}
S.~Karaman and E.~Frazzoli, ``Optimal kinodynamic motion planning using
  incremental sampling-based methods,'' in \emph{CDC}, 2010.

\bibitem{KarFra13}
------, ``Sampling-based optimal motion planning for non-holonomic dynamical
  systems,'' in \emph{ICRA}, 2013, pp. 5041--5047.

\bibitem{PerETAL12}
A.~Perez, R.~Platt, G.~Konidaris, L.~Kaelbling, and T.~Lozano{-}P{\'{e}}rez,
  ``{LQR}-{RRT}*: Optimal sampling-based motion planning with automatically
  derived extension heuristics,'' in \emph{ICRA}, 2012.

\bibitem{WebBer13}
D.~J. Webb and J.~P. van~den Berg, ``Kinodynamic {RRT}*: Asymptotically optimal
  motion planning for robots with linear dynamics,'' in \emph{ICRA}, 2013, pp.
  5054--5061.

\bibitem{XieETAL15}
C.~Xie, J.~P. van~den Berg, S.~Patil, and P.~Abbeel, ``Toward asymptotically
  optimal motion planning for kinodynamic systems using a two-point boundary
  value problem solver,'' in \emph{ICRA}, 2015.

\bibitem{PapaETAL14}
\BIBentryALTinterwordspacing
G.~Papadopoulos, H.~Kurniawati, and N.~M. Patrikalakis, ``Analysis of
  asymptotically optimal sampling-based motion planning algorithms for
  lipschitz continuous dynamical systems,'' \emph{CoRR}, vol. abs/1405.2872,
  2014. [Online]. Available: \url{http://arxiv.org/abs/1405.2872}
\BIBentrySTDinterwordspacing

\bibitem{LiWAFR14}
Y.~Li, Z.~Littlefield, and K.~E. Bekris, ``Sparse methods for efficient
  asymptotically optimal kinodynamic planning,'' in \emph{{WAFR}}, 2014.

\bibitem{LiETAL16}
------, ``Asymptotically optimal sampling-based kinodynamic planning,''
  \emph{IJRR}, vol.~35, no.~5, pp. 528--564, 2016.

\bibitem{HauserZ16}
K.~Hauser and Y.~Zhou, ``Asymptotically optimal planning by feasible
  kinodynamic planning in a state-cost space,'' \emph{{IEEE} Trans. Robotics},
  vol.~32, no.~6, pp. 1431--1443, 2016.

\bibitem{LaVKuf01}
S.~M. LaValle and J.~J. Kuffner, ``Randomized kinodynamic planning,''
  \emph{IJRR}, vol.~20, no.~5, pp. 378--400, 2001.

\bibitem{HsuKLR02}
D.~Hsu, R.~Kindel, J.-C. Latombe, and S.~Rock, ``Randomized kinodynamic motion
  planning with moving obstacles,'' \emph{IJRR}, vol.~21, no.~3, pp. 233--255,
  2002.

\bibitem{pontryagin1987}
L.~S. Pontryagin, V.~G. Boltyanskii, R.~V. Gamkrelidze, and E.~F. Mishchenko,
  \emph{{T}he {M}athematical {T}heory of {O}ptimal {P}rocesses}.\hskip 1em plus
  0.5em minus 0.4em\relax Wiley, New York, 1962.

\bibitem{CBHKKLT05}
H.~Choset, K.~M. Lynch, S.~Hutchinson, G.~Kantor, W.~Burgard, L.~E. Kavraki,
  and S.~Thrun, \emph{Principles of Robot Motion: Theory, Algorithms, and
  Implementation}.\hskip 1em plus 0.5em minus 0.4em\relax MIT Press, June 2005.

\bibitem{KunzS14}
T.~Kunz and M.~Stilman, ``Kinodynamic {RRTs} with fixed time step and
  best-input extension are not probabilistically complete,'' in \emph{WAFR},
  2014, pp. 233--244.

\bibitem{KSLBH18}
M.~Kleinbort, K.~Solovey, Z.~Littlefield, K.~E. Bekris, and D.~Halperin,
  ``Probabilistic completeness of {RRT} for geometric and kinodynamic planning
  with forward propagation,'' \emph{IEEE RA-L}, pp. 1--1, 2018.

\bibitem{CRCbookChap39}
D.~M. Ming C.~Lin and Y.~J. Kim, ``Collision and proximity queries,'' in
  \emph{Handbook of Discrete and Computational Geometry}, 3rd~ed.\hskip 1em
  plus 0.5em minus 0.4em\relax CRC press, 2018, ch.~51.

\bibitem{dmitruk2009development}
A.~Dmitruk, ``On the development of {P}ontryagin's {M}aximum {P}rinciple in the
  works of {A. Ya. Dubovitskii} and {AA Milyutin},'' \emph{Control and
  Cybernetics}, vol.~38, no.~4A, pp. 923--957, 2009.

\bibitem{chitour2008singular}
Y.~Chitour, F.~Jean, and E.~Tr{\'e}lat, ``Singular trajectories of
  control-affine systems,'' \emph{SICON}, vol.~47, no.~2, pp. 1078--1095, 2008.

\bibitem{trelat2000some}
E.~Tr{\'e}lat, ``Some properties of the value function and its level sets for
  affine control systems with quadratic cost,'' \emph{Journal of Dynamical and
  Control Systems}, vol.~6, no.~4, pp. 511--541, 2000.

\bibitem{antoine1990etude}
P.~Antoine and H.~Zouaki, ``Etude locale de l'ensemble des points critiques
  d'un probl\`eme d'optimisation param\'etr\'e,'' \emph{C. R. Acad. Sci. Paris
  S\'er. I Math.}, vol. 310, pp. 587--590, 1990.

\bibitem{lee1967foundations}
E.~B. Lee and L.~Markus, ``Foundations of optimal control theory,'' Minnesota
  Univ Center For Control Sciences, Tech. Rep., 1967.

\bibitem{lee2001introduction}
J.~M. Lee, \emph{Introduction to smooth manifolds}.\hskip 1em plus 0.5em minus
  0.4em\relax Springer, 2001.

\bibitem{haberkorn2011convergence}
T.~Haberkorn and E.~Tr{\'e}lat, ``Convergence results for smooth
  regularizations of hybrid nonlinear optimal control problems,'' \emph{SICON},
  vol.~49, no.~4, pp. 1498--1522, 2011.

\bibitem{bonalli2019continuity}
R.~Bonalli, B.~H{\'e}riss{\'e}, and E.~Tr{\'e}lat, ``Continuity of {P}ontryagin
  extremals with respect to delays in nonlinear optimal control,''
  \emph{SICON}, vol.~57, no.~2, pp. 1440--1466, 2019.

\bibitem{brezis2010functional}
H.~Brezis, \emph{Functional analysis, {S}obolev spaces and partial differential
  equations}.\hskip 1em plus 0.5em minus 0.4em\relax Springer Science \&
  Business Media, 2010.

\bibitem{AineSNHL15}
S.~Aine, S.~Swaminathan, V.~Narayanan, V.~Hwang, and M.~Likhachev,
  ``Multi-heuristic {A}*,'' \emph{IJRR}, vol.~35, no. 1-3, pp. 224--243, 2016.

\bibitem{GipsonMK13}
B.~Gipson, M.~Moll, and L.~E. Kavraki, ``Resolution independent density
  estimation for motion planning in high-dimensional spaces,'' in \emph{ICRA},
  2013, pp. 2437--2443.

\end{thebibliography}

\end{document}